\documentclass[12pt, oneside, reqno]{amsart}
 \usepackage{graphicx}
\usepackage{amsmath,amsthm,amsfonts,amscd,amssymb,mathrsfs,cite}
\usepackage[initials]{amsrefs} 
\usepackage[all]{xypic}

\usepackage{amscd}   
\usepackage[all]{xy} 
\usepackage{booktabs} 
\usepackage{stmaryrd}
\usepackage{mathtools}
\usepackage{hyperref}
\usepackage{epstopdf}
\usepackage[utf8x]{inputenc} 
\usepackage{soul} 
\usepackage{nccmath} 
\usepackage{url}
\DeclareMathAlphabet{\mathscr}{OT1}{pzc}{m}{it} 

\DeclarePairedDelimiter\ceil{\lceil}{\rceil}

\makeatletter
\g@addto@macro\normalsize{%
  \setlength\abovedisplayskip{4pt}
  \setlength\belowdisplayskip{4pt}
  \setlength\abovedisplayshortskip{4pt}
  \setlength\belowdisplayshortskip{4pt}
}
\makeatother
\setlength{\abovecaptionskip}{2pt} 
\setlength{\belowcaptionskip}{-10pt} 

\usepackage{hyperref}
  \hypersetup{colorlinks=true,citecolor=blue, urlcolor= cyan}

 \AtBeginDocument{%
    \def\MR#1{} 
 }

\topmargin=-0.3in %
\evensidemargin=0in %
\oddsidemargin=0in %
\textwidth=6.5in %
\textheight=9.0in %
\headsep=0.4in %

\usepackage{color}

\newtheorem{theorem}{Theorem}[section]

\newtheorem{prop}[theorem]{Proposition}


\newcommand{\SL}{\operatorname{SL}}
\newcommand{\GL}{\operatorname{GL}}

\newcommand{\PP}{\mathbb{P}}

\newcommand{\RR}{\mathbb{R}}

\newcommand{\CC}{\mathbb{C}}

\newcommand{\Seg}{\operatorname{Seg}}

\def\phi{ \varphi }

\def \H{\mathcal{H}}
\def \Hn{\mathcal{H}^{\otimes n}}

\theoremstyle{definition}

\newenvironment{example}
  {\pushQED{\qed}\examplex}
  {\popQED\endexamplex}

\theoremstyle{remark}
\newtheorem{remark}[theorem]{Remark}

\setcounter{MaxMatrixCols}{20}

\newcommand{\bra}[1]{{\left\langle{#1}\right\vert}}
\newcommand{\ket}[1]{{\left\vert{#1}\right\rangle}}
\newcommand{\braket}[2]{{\left\langle {#1}\vert{#2}\right\rangle}}

\begin{document}
\date{\today}

\author{Hamza Jaffali}\email{hamza.jaffali@utbm.fr}
\address{Femto-ST/UTBM, Universit\'e de Bourgogne Franche-Comt\'e, 90010, Belfort, France
}


\author{Luke Oeding}\email{oeding@auburn.edu}
\address{Department of Mathematics and Statistics,
Auburn University,
Auburn, AL, USA
}
\title{Learning Algebraic Models of Quantum Entanglement}
\keywords{Quantum Entanglement, Classification, Algebraic Varieties, Machine Learning, Neural Networks}

\begin{abstract}
We review supervised learning and deep neural network design for learning membership on algebraic varieties.
We demonstrate that these trained artificial neural networks can predict the entanglement type for quantum states. We give examples for detecting degenerate states, as well as border rank classification for up to 5 binary qubits and 3 qutrits (ternary qubits). 
\end{abstract}
\maketitle
\section{Introduction}
Recent efforts to unite Quantum Information, Quantum Computing, and Machine Learning have largely been centered on integrating quantum algorithms and quantum information processing into machine learning architectures \cite{Wiebe2012,Lloyd2014,Rebentrost2014,Schuld2014,Schuld2015,Wiebe2016,Biamonte2017,Sheng2017,Kerenidis2018,Wossnig2019}. Our approach is quite the opposite -- we leverage Machine Learning techniques to build classifiers to distinguish different types of quantum entanglement. Machine Learning has been used to address problems in quantum physics and quantum information, such as quantum state tomography \cite{Quek2018}, quantum error correction code \cite{Nautrup2018} and wave-function reconstruction \cite{Beach2018}.  Here we focus on quantum entanglement. While we were inspired by the approach of learning algebraic varieties in \cite{Breiding2018}, our methods differ in that we are not trying to find intrinsic defining equations of the algebraic models for entanglement types, but rather building a classifier that directly determines the entanglement class. Distinguishing entanglement types may be useful for quantum information processing and in improving and increasing the efficiency of quantum algorithms, quantum communication protocols, quantum cryptographic schemes, and quantum games. Our methods generalize to  cases where a classification of all entanglement types is not known, and cases where the number of different classes is not finite (see for instance \cite[Ch.10]{LandsbergTensorBook}, or \cite{Vinberg-Elasvili}).

We only focus on \emph{pure states} for representing quantum systems, which is sufficient for studying quantum computations and quantum algorithms. This is opposed to the noisy approach with density matrices and mixed states, which is used when one needs to account for the noise and the interaction with the environment \cite{Nielsen2011}. 

\subsection{Basic notions} A basic reference for tensors is \cite{LandsbergTensorBook}. 
The quantum state of a particle can be represented by a unit vector $\ket{\psi}$ in a Hilbert space $\H$ (typically $\H=\CC^d$ or $\RR^{d}$), with basis $\{ \ket{x} \mid x \in \llbracket 0, d-1 \rrbracket\}$ in decimal notation. The state of an $n$-qudit quantum system is represented by a unit vector $\ket{\psi}$  in a tensor product $\Hn$ of the state spaces for each particle, where for simplicity we assume all particles of the same type.  The tensor space $\Hn$ has basis $\ket{ij\ldots k}:= \ket i \otimes \ket j \otimes \dots \otimes \ket k$, where $\ket i, \ket j, \ldots, \ket k$ are basis elements of $\H$. The dual vector space $\H^*$ is represented by vectors $\bra{\phi}$, and we use the standard Hermitian inner product $ (\ket \psi , \ket \phi) \mapsto \braket{\phi}{\psi} =: \sum_{i} \overline{\phi_i} \psi_i \in \CC$, where $\ket \phi = \sum_i \phi_i \ket i$, and similarly for $\psi$. 

The study of quantum entanglement is often focused on orbits (equivalence classes) under the action of the SLOCC (Stochastic Local Operations and Classical Communication) group, which algebraists know as $G=\SL(\H)^{\times n}$, the cartesian product of special linear groups, \emph{i.e.} normal changes of coordinates in each mode.

\subsection{First examples: separating states by algebraic invariants} Consider the pair of binary state particles, called a 2-qubit system. Here there are only two different entanglement types up to the SLOCC action, represented by $\ket{00}$ and $\frac{1}{\sqrt{2}}\left(\ket{00} + \ket{11}\right)$. All other 2-qubit states can be moved to one of these by the action of the SLOCC group \cite{Miyake2002}. Determining on which orbit a given state $\ket{\phi}$ is (and hence its entanglement type), can be done by computing the $2\times 2$ matrix determinant: Set $\phi_{i,j}:=\braket{ij}{\phi}$. The value of $\det\left( \begin{smallmatrix} \phi_{00} & \phi_{01} \\ \phi_{10} & \phi_{11} \end{smallmatrix} \right)$ is either 0 or $\frac{1}{2}$ if $\ket{\phi}$ is respectively of type $\ket{00}$ or $\frac{1}{\sqrt{2}}\left(\ket{00} + \ket{11}\right)$ \cite{Miyake2002}. 
In particular, the non-general states live on a quadric hypersurface \cite{CarliniGrieveOeding}. An Artificial Neural Network classifier (see Section~\ref{sectionANN}) can also be trained to test membership on these two types of states. 

For 3-qubit systems, the rank (a numerical invariant) and the determinant (a polynomial invariant) generalize to the multilinear rank and the Cayley hyperdeterminant \cite{Miyake2002}, respectively. 
A given state $\ket \phi \in \H^{\otimes 3} = \CC^2\otimes\CC^2\otimes\CC^2$ has coordinates
\[
\phi_{ijk} = \braket{ijk}{\phi}. 
\]
The three 1-flattenings are as follows:
\[
F_{1}(\ket{\phi}) = (\phi_{ijk})_{i,jk}, \quad
F_{2}(\ket{\phi}) = (\phi_{ijk})_{j,ik}, \quad
F_{3}(\ket{\phi}) = (\phi_{ijk})_{k,ij}
.\]
The ranks of these flattenings comprise a vector called the \emph{multilinear rank}. 
No flattening has rank 0 since that only occurs if $\ket{\phi} =0$, but $\ket{\phi}$ is a unit vector. If the multilinear rank is $(1,1,1)$, then $\ket{\phi}$ is separable. If the multilinear rank is $(1,2,2)$ then $\ket{\phi}$ is a bi-separable state of the form $\frac{1}{\sqrt 2}\left(\ket{000} + \ket{011}\right)$ up to SLOCC. The other bi-separable states are permutations of this up to SLOCC.  Finally, if  the multilinear rank is $(2,2,2)$, then up to a SLOCC transformation, $\ket{\phi}$ is either the so called W-state $\frac{1}{\sqrt 3}\left(\ket{001} + \ket{010} + \ket{100}\right)$ or a general point $\frac{1}{\sqrt 2}\left(\ket{000} + \ket{111}\right)$ \cite{Miyake2002,Holweck2012}. These states are distinguished, respectively, by the vanishing or non-vanishing of the well-known SLOCC-invariant, the $2\times2 \times2$ hyperdeterminant:
\[\begin{split}
\Delta_{222}(x) =\; &x_{000}^{2}x_{111}^{2}
+x_{011}^{2}x_{100}^{2}
+x_{010}^{2}x_{101}^{2}
+x_{001}^{2}x_{110}^{2}
\\ &
-2x_{000}x_{011}x_{100}x_{111}
-2x_{000}x_{010}x_{101}x_{111}
-2x_{000}x_{001}x_{110}x_{111}
+4x_{000}x_{011}x_{101}x_{110}
\\&
-2x_{010}x_{011}x_{100}x_{101}
-2x_{001}x_{011}x_{100}x_{110}
-2x_{001}x_{010}x_{101}x_{110}
+4x_{001}x_{010}x_{100}x_{111}
.\end{split}
\]
These are all the possible types of states up to SLOCC for 3-qubit systems, and we have a complete algebraic description of these states as well. So, this is a good testing ground for other methods since we know how to test ground truth algebraically. In Section~\ref{sec:3qubit} we summarize the performance of a neural network for separating these states. 

In the 4-qubit case there is still a classification \cite{Verstraete2002}, and a complete set of invariants that can be used to classify entanglement types is known \cite{Holweck2014,Holweck2017}. In addition, algebraic invariants for all border ranks for 4-qubit systems are known classically (originally studied by Segre \cite{Segre1920}) and are given by the minors of 1- and 2- flattenings \cite[Ch.~7.2]{LandsbergTensorBook}. 
We note that the $2\times 2\times 2\times 2$ hyperdeterminant has degree 24, and is computable by Schlafli's method, \cite{GKZ}. However, in the 5-qubit case the $2^{\times 5}$ hyperdeterminant has degree 128, is not computable by Schlafli's method, \cite{WeyZel_Sing}, and is surely very complicated as indicated by the 4-qubit case \cite{HSYY_hyperdet}. However, in Section~\ref{dual_classifier} we show that neural networks can distinguish between singular and non-singular states even though algebraic invariants are likely to fail.

\subsection{Prior Work and Outline}
Machine Learning has been used to study entanglement detection, entanglement measurement, and entanglement classification. A common focus is the density matrices formalism, for which the asymptotic problem of deciding separability is a NP-hard problem. Neural networks have been used for estimating entanglement measures such as logarithmic negativity, R\'enyi entropy or concurrence in 2-qubits (pure and mixed) or many-body systems \cite{Gray2018,Berkovits2018,Govender2017}; for encoding several CHSH (witness) inequalities simultaneously in a network to detect entangled states \cite{Ma2018}; for computing the closest separable state in a complex valued network \cite{Che2018}; for recognizing the entanglement class of 3-qubit systems \cite{Behrman2011}; and for detecting entanglement in families of qubits and qutrits systems in the bipartite case \cite{Wisniewska2015}. A convex hull approximation method combined with decision tree and standard ensemble learning (bagging) algorithms was used in \cite{Lu2018} to classify separable and entangled states. The forest algorithm (also using decision trees) was used in \cite{Wang2017} to detect entanglement and was compared to quantum state tomography (up to five qubits). Principal Component Analysis (PCA) was used to determine the dimension and intrinsic defining equations of certain algebraic varieties \cite{Breiding2018}.  In the 2-qubit case neural networks, support vector machines, and decision trees were used to detect non-classical correlations such as entanglement, non-locality, and quantum steering \cite{Yang2018}. 

Indeed, Machine Learning can be a relevant tool for entanglement detection, at least in some limited cases. However, prior studies were mostly limited to the 2-qubit or bipartite case, because of the possibility of generating properly separable and entangled mixed states using the PPT criterion or Schmidt decomposition. It seems difficult to generalize the prior supervised approaches to higher dimensions or to systems with more than two particles. 

Instead, we focus on the problem of entanglement detection and classification by learning algebraic varieties  (Segre variety, dual variety, secant varieties) that characterize different entanglement classes for pure states. Our method can be generalized to higher dimensions and systems with several particles, bringing original tools for distinguishing non-equivalent entanglement classes for quantum systems for which we do not know the complete classification or we do not have exact algorithms (as proposed by \cite{Holweck2012,Holweck2014,Holweck2017}) to determine the entanglement type (as it is the case for 5-qubit systems for instance).

As noted in \cite{Breiding2018} there are many instances of high degree hypersurfaces (like the hypersurface of degenerate states in $\Hn$) that are easy to sample, but there is little hope to learn their equations from samples, so prior methods do not apply. Yet, in Section~\ref{dual_classifier} we give cases where an artificial neural network can be trained to determine membership on these high degree hypersurfaces, even without the inaccessible defining equation.

In Section~\ref{sectionDesign}, we investigate several architectures of feed-forward neural networks for learning membership on algebraic varieties, by using previous knowledge about the defining equations to design networks. In Section~\ref{sectionExperiments}, we study several examples by building classifiers with neural networks for  the cases of qubits and qutrits. Finally, in Section~\ref{sectionApplication}, we use these classifiers to distinguish quantum entanglement classes. 

\section{Network design for learning algebraic varieties}\label{sectionDesign}

\subsection{Basics of Artificial Neural Networks}\label{sectionANN}
For a complete introduction to Artificial Neural Networks and related concepts, see any of \cite{Lippmann1988,Barron1988,Khanna1990,Widrow1990,Vemuri1992,Wang1993,Hassoun1995,Haykin1998,anthony2009neural}. Here we give a brief overview.
Inspired by biological neural networks, Artificial Neural Networks (ANN) are computing systems whose goal is to reproduce the functionality and basic structure of the human brain \cite{McCulloch1990}.  
An artificial neural network is composed of several artificial neurons, regrouped using a specific architecture, which is designed to be trained to perform a specific task (such as classification and regression) without any explicit instructions or rules \cite{Haykin1998}.

In 1943, McCulloch and Pitts proposed the first model of an artificial neuron \cite{McCulloch1990}. An artificial neuron (Figure \ref{artificial_neuron}) is defined by inputs (eventually coming for other neurons), by weights (synaptic weights for each input), a weighted sum (computed from the two previous ones), a threshold (or bias), an activation function, and an output \cite{Lippmann1988,Barron1988,Hassoun1995,Haykin1998,anthony2009neural}.

\begin{figure}[!h]
\centerline{\includegraphics[width=.5\textwidth]{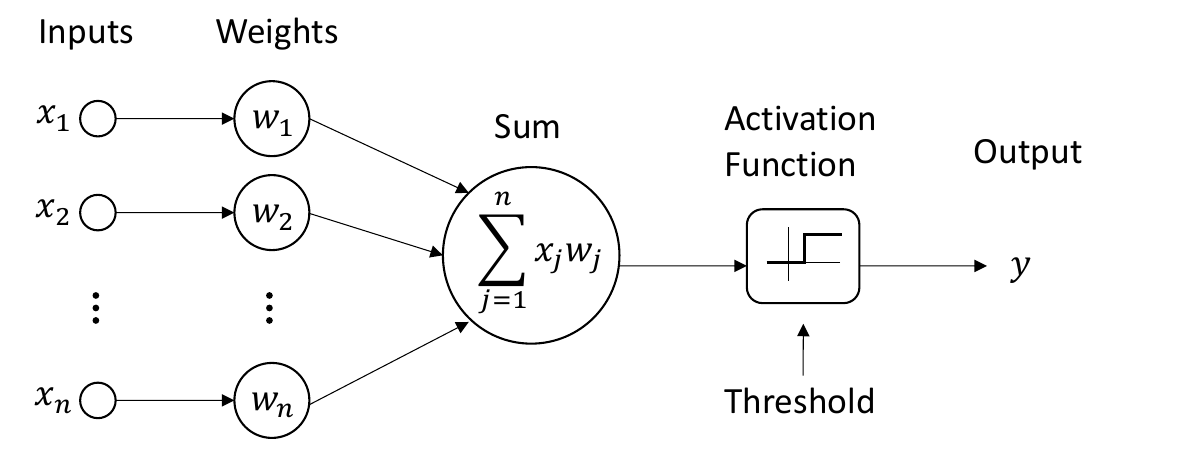}}
\caption{Illustration of an artificial neuron.}
\label{artificial_neuron}
\end{figure}

The output $y= g(\mathcal{U})$ of the neuron is equal to the activation function $g$ applied to the \emph{weighted sum} (with the threshold $\theta$ frequently added to the weighted sum with weight equal to 1) \cite{Cybenko1989}. If we denote by $x_1,\dots,x_n$ the inputs of the neuron, and by $w_i$ the weight associated with the $i$-th input, then the weighted sum, frequently denoted by $\mathcal{U}$, is defined as follows 
\begin{equation}
\mathcal{U} = w_1 x_1 + w_2 x_2 + \cdots + w_n x_n + \theta ~.
\end{equation}

We implemented neural networks in Python using the Keras library \cite{Chollet2015} with the \emph{Nadam} solver for minimizing the loss function (either binary or categorical cross-entropy) in the learning step \cite{Sutskever2013,Dozat2016}.
This setup allows flexible implementation of feed-forward neural networks and the choice of parameters (architecture, activation functions, loss function, etc.).

%
%

Geometrically, if one can find a hyperplane (codimension 1) that separates the data into two classes
 (such as the logical \texttt{OR} function), 
then the binary classification problem can be solved using a single artificial neuron \cite{Kung1992,Wang1993,Hassoun1995,Cirrincione2010}. 
%
On the other hand, we are interested in cases where there is not a separating hyperplane, (such as in the case of the logical \texttt{XOR} function \cite{Bishop1995,Hassoun1995,Borne2007}). So, we use more complex structures, such as the Multi-Layer Perceptron (MLP) model \cite{Barron1988,Haykin1998}. We used the feed-forward dense layer configuration.


Once activation and loss functions are fixed, the remaining parameters to configure are the number and size of the layers. The \emph{Universal Approximation Theorem} and related studies \cite{Cybenko1989,Hornik1991,Bishop1995,Hassoun1995,Csaji2001,Lu2017,Hanin2017}, show that ANNs can approximate any reasonable function. However, the question of providing the best network, in terms of accuracy and computational efficiency, for a given task or problem, is still open. The choice of the depth and the width of neural networks are often chosen by trial and error, considering the trade-off between performance and computational cost.


\subsection{Learning algebraic varieties}\label{learning_algebraic_varieties}
Here we present a model of ANN for learning membership on algebraic varieties modeling quantum entanglement. 
An algebraic variety is a geometric object defined as the zero locus of a set of homogeneous polynomials. 
In order to teach a machine how to recognize points on algebraic varieties, we must encode the polynomial defining equations of these objects (or an approximation of them) into the learning model. We would like to do this as efficiently as possible (using the least number of parameters) to avoid overfitting.

It is an instructive exercise to show that ANNs can be trained to recognize (determine membership) points on linear spaces (of any co-dimension) essentially by linear interpolation; see \cite{Raturi2018}, or any of \cite{Duda1973,Lippmann1988,Gibson1990,Khanna1990,Kung1992,Bishop1995,Haykin1998,anthony2009neural,Dreyfus2002,Borne2007}. To classify points on algebraic varieties one can also use ANNs as an alternative to polynomial interpolation. The so-called Polynomial Neural Networks (PNN) and were proposed in \cite{Oh2003}. See also any of  \cite{Specht1967,Ivakhnenko1971,Widrow1990,Kung1992,Shin1992,Shin1995,Bishop1995,Hassoun1995,Dreyfus2002}.

Let us collect some well-known results in algebraic geometry relevant to modeling polynomials with ANNs. For simplicity we focus on modeling a single polynomial. For several polynomials one may design a neural network using a copy of the one we describe here for each polynomial.  One way to determine a polynomial is to calculate the coefficients for every monomial. One might expect that this would not be the most efficient way to represent polynomials that may have additional structure, such as sparseness. A sum of powers representation has a chance to exploit hidden sparseness or other structure. 

What shape of ANN with power-function activations is necessary to model a polynomial equation of degree $d$ in $n$ variables? Such an ANN essentially accomplishes the task of writing the polynomial as a sum of powers of linear forms, and it is tightly linked (by apolarity) to interpolating polynomials. In this case a straightforward dimension count gives a good guess for the appropriate architecture. The Alexander-Hirschowitz (AH) Theorem (see \cite{Brambilla2008} for a modern treatment) tells us when the naive dimension count fails. In fact, the AH theorem states that a general\footnote{By ``general'' we mean avoiding a measure-zero set of possible counterexamples.} homogeneous polynomial $p$ of degree $d$ in $n$ variables can be expressed as the sum of $T=\ceil{\frac{1}{n}\binom{d+n-1}{d}}$ $d$-th powers of linear forms (except for quadratic forms 
 and a few other special cases), \emph{i.e.}
\[
p(x_1,x_2,\dots,x_n) = \sum_{j=1}^{T} \left( \sum_{i=1}^{n} a_{ij}x_i \right)^d .
\]

Dehomogenizing the AH result (by setting the last variable to 1, for example) one obtains a bound for 2-layer neural networks modeling affine hypersurfaces.
We implement $T=\ceil{\frac{1}{n}\binom{d+n-1}{d}}$ neurons with activation function $g : x \mapsto x^d$ in the first layer, and then combine all the outputs in a linear combination using a neuron in the second layer. Suppose we have $n$ inputs corresponding to the $n$ variables of the homogeneous polynomial $p$. If we denote by $w_{i,j}$ the weight associated with the $i$-th input $x_i$ and the $j$-th neuron in the first layer, and by $\theta_j$ the threshold of the $j$-th neuron, then the output $s_j$ of the neuron $j$ in the first layer is
\[
s_j = g(\mathcal{U}_j) = \left( \sum_{i=1}^{n} w_{i,j}x_i + \theta_j \right)^d .
\]

The threshold $\theta_j$  introduces an inhomogeneity which we can remove by adding an extra variable $x_{n+1}$, replacing $\theta_j$ with $\theta_jx_{n+1}$. Then the AH theorem for $n+1$ variables, with $x_{n+1}=1$ yields the bound for non-homogeneous outputs.  This idea also appears in \cite{Kileel2019}. 

The last step of the network design is classification: is the input point on (or outside) the variety defined by the equations modeled by the network? Note that after the two first layers, we should obtain a value $s$ (which is 0) when the input point is on the variety, and a different value if the point is not. So, this step is equivalent to recognizing a real number $s$ in an interval (whose size depends on the training data). Adding a single sigmoid neuron will not solve the classification problem because single neurons only solve inequalities, not equalities \cite{Gibson1990,Kung1992}. So, we add another layer before the output layer to recognize this specific value $s$, and then solve this binary classification problem at the output of the network.

The task of recognizing a real number in an interval can be performed using a single layer with four neurons with LeakyReLU activation functions. Thus, by adding such a layer after the first two layers (modeling the equations of the variety), and by adding a last layer (output layer) with only one neuron (with a sigmoid activation function), one can potentially learn any algebraic variety defined by a set of homogeneous polynomials (Figure~\ref{network_last_step}).

\begin{figure}[!h]
\centerline{\includegraphics[width=.6\textwidth]{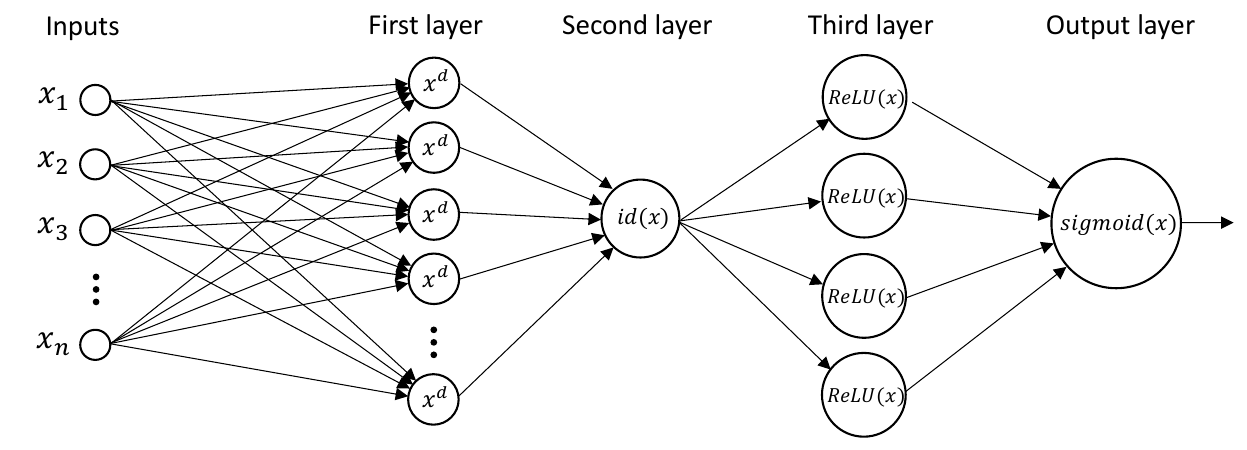}}
\caption{Representation of the network solving the binary classification problem related to algebraic variety's membership.}
\label{network_last_step}
\end{figure}

For practical implementations it is important to know if we can correctly train the network, and if the optimizer can find the right weights and thresholds. For homogeneous equations of low degree and few variables, the network is able to learn the correct weights, and one can recover the equation from the weights of the network. However, when the degree and the number of variables increase, the number of neurons needed in the first layer increases very quickly and it becomes harder to converge to a set of weights and thresholds that model the desired equation. To overcome this, in Subsection~\ref{hybrid_section} we slightly modify the network structure to reduce the number of weights and parameters. For more information on the expressive power of deep neural networks, see \cite{Kileel2019}. 

\begin{remark}
The Universal Approximation Theorem leads one to ask how training a neural network is different from polynomial fitting. Indeed, neural networks are know to be essentially polynomial regression models, with the effective degree of the polynomial growing at each hidden layer \cite{Cheng2018}. 
The correspondence between the values of the function to be interpolated, the basis functions and the basic points on one hand, and the weights, the activations functions and the thresholds (bias) on the other hand can be made explicit \cite{Li2003}. However, in practical applications, data is often noisy and incomplete, and polynomial interpolation is generally subject to overfitting \cite{Witten2016}, while neural networks are able to perform when there are noisy or incomplete data, and have the ability to generalize from the input data \cite{Ong2006}. This dilemma between fitting the training dataset and being able to generalize the models to non-encountered data is known as the bias-variance trade-off \cite{Belkin2018}. Raturi explains in \cite{Raturi2018} that solving the same problem as polynomial interpolation requires much less computational time and resources when using neural networks. 
It is also known that neural networks can interpolate and model a function via sigmoidal functions to approximate $n$  samples in any dimension, with arbitrary precision and without training \cite{Llanas2006}.
\end{remark}

\subsection{Hybrid networks}\label{hybrid_section}


Here we introduce a hybrid network architecture for classifying membership on a parametrized algebraic variety and discuss its advantages. Training an ANN, which is an optimization process, doesn't always reach the set of weights that minimize the loss function. This is due to the existence of local minima, related to the utilization of the non-convex $x \mapsto x^d$ activation functions.

The second layer (Figure~\ref{network_last_step}) contains only one neuron (with the identity activation function), introduced to take the linear combination of $d$-th power of weighted sums of input variables. One would like to remove this layer, and directly link outputs of the first layer with the third layer (containing only neurons with LeakyReLU activation function). We call these \emph{hybrid networks}, because they combine layers with both $x \mapsto x^d$ and LeakyReLU activation functions.

Moreover, the geometric interpretation of the network is now different since every neuron in the second layer (LeakyReLU neurons) take as input a different linear combination of all $d$-th power forms. The second layer combines different homogeneous polynomials that are not necessarily equal to the homogeneous polynomial defining the algebraic variety, but they can be used to approximate this last as a set of inequalities. The third and last layer of the network is the output layer, which will depend on the classification problem one wants to solve (binary classification, several classes, etc.). The architecture of the hybrid network is summarized in Figure~\ref{network_hybrid}.

\begin{figure}[!h]
\centerline{\includegraphics[width=.6\textwidth]{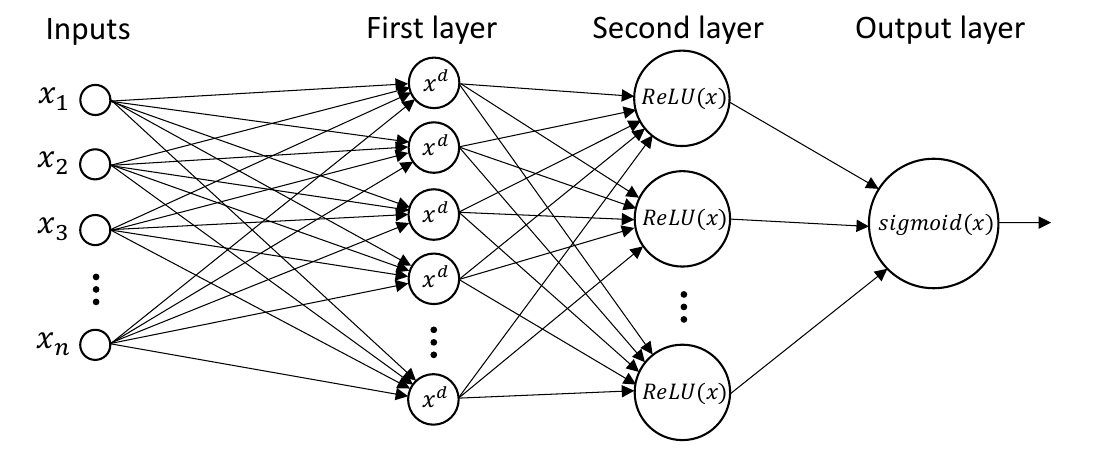}}
\caption{Representation of a hybrid network for learning a homogeneous polynomial equation of degree $d$ in $n$ variables and coefficients in $\RR$.}
\label{network_hybrid}
\end{figure}

This type of network showed better results and quicker learning than the previous version of the network. The AH theorem still can be used to give an idea about the number of neurons in the first layer, and sometimes extra neurons should be added or removed to boost the performance of the network during the learning phase. By experience, the number of neurons in the second layer can be chosen quite small (between 4 and 10 for binary classification problems) and will depend on the number of neurons in the output layer (it might not be smaller than the number of neurons in the output layer).

\section{Experiments}\label{sectionExperiments}

Consider a vector space $V$ and a group $G \subset \GL(V)$. The set of all such pairs $(V,G)$ for which $G$ acts on $V$ with finitely many orbits has been classified by Kac \cite{Kac80, Kac85}. Since then Vinberg and others have classified all the orbits in many of these cases. There are special cases where $G$ acts on $V$ with infinitely many orbits, yet those orbits may still be represented using finitely many parameters, the \emph{tame} case. A fantastic example of such is \cite{Vinberg-Elasvili}, which gave a classification of the orbits of 9 dimensional trivectors utilizing a connection to the Lie algebra $\mathfrak{e}_8$. This classification also implies a classification for 3 qutrit systems among others. After orbit classification, one desires effective methods to determine orbit membership. Separating orbits is difficult in general, yet this mathematical problem lies at the core of geometric approaches to Valliant's version of P versus NP \cite{LandsbergCambridgeBook}.

One approach is to use Invariant Theory to build a set of invariants and covariants that characterize each orbit, or each family represented by parameters \cite{Holweck2012,Luque2003}. In the four-qubit case, an infinite tame case, Verstraete \textit{et al.} gave a list of 9 normal forms (depending on parameters) that give a parametrization of all SLOCC orbits \cite{Verstraete2002,Chterental2006}. An algorithm  to determine orbit membership for the four-qubit case was proposed in \cite{Holweck2014,Holweck2017}. In general, the complexity of these invariants grows very rapidly with the number of particles in the system. Already in the five-qubit case, a complete description of the algebra of SLOCC-invariant polynomials is out of reach of any computer system \cite{Luque2006}.

Therefore, it is worth considering other approaches. In this section we give examples that serve as proof of the concept that artificial neural networks can be trained to give efficient and effective classifiers for quantum states.


\subsection{Detecting separable states}\label{segre_classifier} 
Here we focus the real case, and assume $\H = \RR^{d}$.
A state $\ket{\phi} \in \Hn$ is \emph{separable} if it can be written as a tensor product of pure states as follows 
\[
\ket{\phi} = \ket{\phi_1} \otimes \cdots \otimes \ket{\phi_n}
,\] 
for $\ket{\phi_i} \in \H$ for all $i$ \cite{Nielsen2011}. Algebraic geometers call the projective variety of separable states the Segre variety, which is parametrized as:
\[\begin{matrix}
\Seg\colon & \PP \H \times \dots \times \PP \H &\to& \PP \Hn \\
&([x_1],[x_2],\ldots, [x_n]) & \mapsto& [x_1 \otimes x_2 \otimes \ldots \otimes x_n]
\end{matrix},
\]
where the square brackets $[\cdot]$ denote the equivalence class by rescaling, which makes physical sense to consider since we assume states are unit vectors \cite{Holweck2012}.

The following is straightforward, essentially known to Segre in the early 1900's. 
\begin{prop}\label{prop:separable}
A state $\ket{\phi} \in \Hn$ is separable if and only if either of the following equivalent conditions hold. 
\begin{enumerate}
\item $\ket{\phi}$ is in the SLOCC orbit of $\ket{00\cdots0}$. 
\item All $n$ of the 1-flattenings $F_i(\ket{\phi}) \colon \H^{\otimes (n-1)} \to \H$ have rank 1. 
\end{enumerate}
\end{prop}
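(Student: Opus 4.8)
The plan is to prove the two equivalences \emph{separable} $\Leftrightarrow$ (1) and \emph{separable} $\Leftrightarrow$ (2) directly; the equivalence of (1) and (2) then comes for free. (I read the hypothesis as $\phi \in \Hn$, $\phi \neq 0$, with $\H = \CC^d$ and $d \geq 2$, which is the only case of interest for qudits.) For \emph{separable} $\Rightarrow$ (1), write $\phi = \phi_1 \otimes \cdots \otimes \phi_n$ with each $\phi_i \neq 0$. The key fact is that $\SL(\H)$ acts transitively on $\H \setminus \{0\}$ whenever $d \geq 2$: extend $\phi_i$ to a basis to produce a $\GL(\H)$ element carrying $\phi_i$ to $\ket 0$, then correct the determinant using the stabilizer of $\ket 0$, which surjects onto the scalars since the complementary hyperplane has positive dimension. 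Choosing $g_i \in \SL(\H)$ with $g_i\phi_i = \ket 0$, the SLOCC element $g_1 \otimes \cdots \otimes g_n$ sends $\phi$ to $\ket{00\cdots 0}$. Conversely, if $\phi = (g_1 \otimes \cdots \otimes g_n)\ket{00\cdots 0} = (g_1\ket 0)\otimes\cdots\otimes(g_n\ket 0)$, then $\phi$ is manifestly a product of nonzero vectors, hence separable.

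For \emph{separable} $\Rightarrow$ (2), if $\phi = \phi_1 \otimes \cdots \otimes \phi_n$ then the $i$-th flattening is the outer product $F_i(\phi) = \phi_i\,\bigl(\bigotimes_{k \neq i}\phi_k\bigr)^\top$, a nonzero $d \times d^{n-1}$ matrix of rank one. The substance of the proposition is the converse. Viewing $\phi$ as an element of $\H \otimes \H^{\otimes(n-1)}$ with mode $i$ singled out, the statement that $F_i(\phi)$ has rank $1$ says its image is a line $L_i \subseteq \H$; the standard identification of a tensor with the associated linear map shows this is equivalent to $\phi \in \H \otimes \cdots \otimes L_i \otimes \cdots \otimes \H$, with $L_i$ in slot $i$. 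I would then invoke the multilinear-algebra identity
\[
\bigcap_{i=1}^{n} \bigl( \H \otimes \cdots \otimes L_i \otimes \cdots \otimes \H \bigr) = L_1 \otimes \cdots \otimes L_n,
\]
which is verified at once in a basis adapted to the subspaces $L_i$. Since each $L_i$ is one-dimensional, the right-hand side is the line spanned by $\phi_1 \otimes \cdots \otimes \phi_n$, where $\phi_i$ generates $L_i$; as $\phi \neq 0$ it is a scalar multiple of this product and hence separable. An equivalent route is induction on $n$: a rank-one mode-$1$ flattening gives $\phi = \phi_1 \otimes \psi$ with $\psi \in \H^{\otimes(n-1)}$, and because $\phi_1 \neq 0$ the remaining flattenings of $\phi$ have the same ranks as the corresponding flattenings of $\psi$, so $\psi$ satisfies the hypotheses in one fewer factor and induction closes the argument.

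I expect the only delicate point to be this backward direction of (2): the forward implications and the orbit statement (1) are essentially bookkeeping, whereas recovering the global product structure from the $n$ separate rank-one conditions is exactly the content of the intersection identity above (equivalently, the classical fact that multilinear rank $(1,\dots,1)$ forces tensor rank one). The only other place needing a word of care is the use of $d \geq 2$ in the transitivity of $\SL(\H)$, which is harmless here since every qudit has $d \geq 2$.
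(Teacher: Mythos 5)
Your proof is correct; the main thing to know is that the paper offers no proof at all to compare against --- the proposition is introduced with ``The following is straightforward'' and left entirely unproved, so your write-up supplies content the authors only asserted. Your argument is sound at every step: the transitivity of $\SL(\H)$ on $\H \setminus \{0\}$ for $d \geq 2$ (with the determinant correction absorbed into the stabilizer of $\ket{0}$) settles the equivalence with (1); the outer-product description of $F_i(\phi)$ gives the forward half of (2); and you correctly isolate the one genuinely substantive point, namely that multilinear rank $(1,\dots,1)$ forces tensor rank one. Your intersection identity
\[
\bigcap_{i=1}^{n} \bigl( \H \otimes \cdots \otimes L_i \otimes \cdots \otimes \H \bigr) = L_1 \otimes \cdots \otimes L_n
\]
is the standard way to see this and does check instantly in a basis adapted to the $L_i$; the inductive alternative also works, and its one delicate step --- that contracting the nonzero factor $\phi_1$ against covectors shows the remaining flattenings of $\phi = \phi_1 \otimes \psi$ have the same images (hence ranks) as those of $\psi$ --- is exactly where a careless version would go wrong, and you handle it properly. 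Two small editorial points in your favor: you rightly read the paper's typo $\phi \in \H$ as $\phi \in \Hn$, and you note that ``rank 1'' (rather than rank $\leq 1$) is consistent with the hypothesis $\phi \neq 0$. Nothing is missing.
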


Separable quantum states have no quantum entanglement.
Detecting separable states thus becomes equivalent to detecting quantum entanglement. Therefore, one way to determine if a state is separable is to compute the compact form of the SVD of every 1-flattening: if any 1-flattening has more than 1 significant singular value, the state is not separable. 

To generate training data, we first sample the space of separable states by pushing uniform samples on a product of spheres through the Segre map:
\[\mathcal{S}\H \times \cdots \times \mathcal{S}\H \to \mathcal{S}\Hn,\]
where $\mathcal{S}(\cdot )$ denotes the states of unit norm. The class of separable states is labeled as the class `0' and represents 50\% of the training dataset. The other half is the class of entangled states, labeled `1', and it is constructed by generating random tensors of rank greater than 2, by summing at least 2 random rank 1 tensors. Note that by Proposition~\ref{prop:separable} to certify the training set one can check the multilinear rank of the purported rank $\geq 2$ tensors to ensure they do not have rank 1 (and hence would be mis-classified). 


\subsubsection{Networks for separable states}
The set of separable states is the zeroset of the $2\times 2$ minors of 1-flattenings, and thus is defined by a set of homogeneous polynomials of degree 2. In the $2\times2$ case, there is only one defining equation, the $2\times2$ determinant. In higher dimensions, several equations of degree 2 define the Segre variety. To select the number of equations, we take (at least) the codimension of the Segre variety in the projective space.

For each quadratic equation, we can use the AH dimension count
 to deduce the number of needed neurons, and then to design the Hybrid Networks. In some cases, we chose to add several extra neurons in the first layer (with quadratic activation function) to accelerate the learning process. As explained before, the second layer will only contain neurons with LeakyReLU activation functions. The last layer will contain only one neuron, with a sigmoid activation function in this case (because of the binary classification problem).

We also implemented networks with only LeakyReLU activation functions neurons (excepted for the last layer). For all the considered cases, we used the same network architecture, which is composed of 4 first layers and a last output layer. We chose to implement a decreasing structure in the network, with 100 neurons in the first layer, 50 in the second, 25 in the third, and 16 in the fourth.
For each case the architecture of the network is provided in the second column of the results tables.

\subsubsection{Results}

In Table~\ref{Segre_Hybrid_results} we present the result for the $2\times2$, the $2\times2\times2$, the $2^{\times 4}$, and the $3\times3\times3$ cases with hybrid networks. In the $2\times2$, the $2\times2\times2$, and the $3\times3\times3$ cases, we used a training dataset of size 56200, a validation dataset of size 12800, and a testing dataset of size 32000. In the $2^{\times 4}$ case, we doubled the size of the training and the validation datasets only (same size for the testing one). We reached an average accuracy of 93\% on the testing datasets for separability classification.

\begin{table}[!h]
 \begin{center}
  \begin{tabular}{|c|c|c|c|c|c|}
\hline
Tensor size & Architecture & Training acc. & Validation acc. & Testing acc. & Loss  \\
\hline
$2\times2$ & (4,4,1) & 96.65\% & 96.60\% & 96.63\% & 0.092\\
$2\times2\times2$ & (21,8,1) & 94.57\% & 94.06\% & 94.44\% & 0.15\\
$2^{\times 4}$ & (1188,8,1) & 91.72\% & 91.60\% & 91.33\% & 0.26\\
$3\times3\times3$ & (332,12,1) & 94.68\% & 92.89\% & 92.94\% & 0.15\\
\hline
 \end{tabular}
 \end{center}
\caption{Hybrid network architectures and accuracies for each tensor size, for separability classification.}
\label{Segre_Hybrid_results}
\end{table}

We used LeakyReLU networks to study the same cases as we did with the hybrid networks, with the addition of the $2^{\times 5}$ one, and we regrouped the results in Table~\ref{Segre_ReLU_results}. In the $2\times2$ and $2\times2\times2$ cases, we used a training dataset of size 102400, a validation dataset of size 25600, and a testing dataset of size 32000. In the $2^{\times 4}$, $2^{\times 5}$ and $3\times3\times3$ cases, we used a training dataset of size 502600, a validation dataset of size 55600, and a testing dataset of size 32000. We reached an average accuracy of 98\% on the testing datasets for separability classification.
These results show that ANNs can be trained to distinguish between separable and entangled states for small to moderate sized tensors. 

\begin{table}[!h]
 \begin{center}
  \begin{tabular}{|c|c|c|c|c|c|}
\hline
Tensor size & Architecture & Training acc. & Validation acc. & Testing acc. & Loss  \\
\hline
$2\times2$ & (100,50,25,16,1) & 98.92\% & 98.78\% & 98.83\% & 0.043 \\
$2\times2\times2$ & (100,50,25,16,1) & 97.80\% & 97.42\% & 97.55\% & 0.074\\
$2^{\times 4}$ & (100,50,25,16,1) & 99.62\% & 99.50\% & 99.53\% & 0.016 \\
$2^{\times 5}$ & (100,50,25,16,1) & 98.83\% & 98.55\% & 98.55\% & 0.037 \\
$3\times3\times3$ & (100,50,25,16,1) & 98.55\% & 98.01\% & 97.92\% & 0.051  \\
\hline
 \end{tabular}
 \end{center}
\caption{LeakyReLU network architectures and accuracies for each tensor size, for separability classification.}
\label{Segre_ReLU_results}
\end{table}

\subsubsection{Complexity and efficiency} \label{sec:complexitySVD}
From an algebraic geometric point of view, determining if a tensor is on the Segre variety is equivalent to deciding if it is a rank one tensor, and this task can be performed using truncated SVD for each flattening. The complexity of computing the singular values of an $m\times n$ matrix is $\mathcal{O}(mn\min\{m,n\})$. This is roughly $\min\{m,n\}$ times the complexity of reading the matrix. 
Computing the singular values of the $i$-th flattening of a tensor of format $n_1\times \dots \times n_t$ has complexity
\[
\mathcal{O}\left(n_i (\frac{n_1\cdots n_t}{n_i}) \min \{n_i ,(\frac{n_1\cdots n_t}{n_i})\} \right)= \mathcal{O}\left((n_1\cdots n_t)\min \{n_i ,(\frac{n_1\cdots n_t}{n_i})\right)
.\]
For ``balanced'' cases $2n_i \leq \sum_i n_i$, thus this complexity is at most 
\[
 \mathcal{O}\left((n_1\cdots n_t)n_i\right),
\]
which is roughly $n_i$ times the complexity of reading the tensor. Now computing all of the singular values of the tensor (in a naive way, not informing one computation by the results of another) one would expect complexity $ \mathcal{O}\left((n_1\cdots n_t)(\sum_i n_i)\right)$. 
The cost of evaluating a trained network is equal to the cost of a feed-forward propagation, which needs to compute all the weighted sums (which is equivalent to compute matrix multiplications) and evaluate the activation function for each neuron. Hence, this complexity depends on the architecture of the network. If we denote by $l$ the number of layers (without counting the input layer) and by $m_1, m_2, \dots, m_l$ respectively the numbers of neurons in each layer, it has a complexity roughly equal to $\mathcal{O}((n_1\cdots n_t)m_1 + m_1m_2 + \cdots + m_{l-1}m_l)$.

So, as the dimensions of the tensor spaces increase, the cost of evaluating an already trained network should be more efficient than the computation of the SVD of flattenings. Moreover, flattenings cease to detect tensor ranks above the size of the flattening, whereas an ANN could be trained to detect higher ranks. Though the cost of training the network cannot be ignored, it is a one-time cost, and the trained network can be used for efficient computation for any number of tests. In addition, once the tensor rank is larger than the dimension of any of the factors, there are few effective methods for determining rank, so a trained ANN classifier may provide valuable insight.

\subsection{Detecting degenerate states}\label{dual_classifier}
For matrices, the opposite of  \emph{rank-1} is \emph{co-rank-1}. 
Recall that a matrix $A\in \RR^{m\times n}$ is \emph{degenerate} (rank deficient or co-rank-1) if and only if any of the following equivalent conditions hold:
\begin{enumerate}
\item  There are non-zero vectors $u \in \RR^m$ and $v \in \RR^n$ such that $u^\top Ax = 0$ for all $x \in \RR^n$   and $y^\top Av = 0$ for all $y \in \RR^m$. 
\item In the case $m=n$, the determinant vanishes: $\det(A) = 0$.
\item Up to Gaussian elimination $A$ has a zero row and a zero column. 
 \end{enumerate}

These conditions carry over in the tensor setting, following \cite{GKZ}. A state  $\ket{\phi} \in \Hn$ is \emph{degenerate} if one of the following equivalent conditions hold:
\begin{enumerate}
\item  There is a pure state  $x = \ket{x_1}\otimes \cdots \otimes \ket{x_n} \in \Hn$ such that  the contraction
\[
\left(\bra{x_1}\otimes \cdots \otimes \bra{x_{i-1}} \otimes 
\bra{h_i} \otimes \bra{x_{i+1}} \otimes \cdots \otimes \bra{x_n}\right) \ket {\phi} =0
\quad \text{for all }i \;\;\text{and for all} \; \ket{h_i} \in \H
\] 
\item  The hyperdeterminant vanishes:  $\textrm{Det}(A) = 0$, where $A$ is the hypermatrix satisfying $A_I = \braket{I}{\phi}$. 
\item There is an SLOCC equivalent state with coordinates $\phi_I = 0$  for all $I$ of Hamming distance $\leq 1$ away from $00\cdots 0$. 
 \end{enumerate}

The hyperdeterminant is a SLOCC invariant polynomial, and thus is used to characterize quantum entanglement. The zero locus of the equation defined by the hyperdeterminant also defines the dual of the Segre variety \cite{Miyake2002}, and the study of singularities of the associated hypersurfaces gives also a qualitative information about quantum entanglement \cite{Holweck2014a,Holweck2016}. 

The hyperdeterminant can also be regarded as a quantitative measure of entanglement and can be used to study entanglement in quantum algorithms for example \cite{Holweck2016a,Jaffali2019}. In a concrete sense the non-degenerate states are the most entangled ones and maximizing the absolute value of the hyperdeterminant can lead to maximally entangled states \cite{Gour2010,Chen2013,Holweck2019}.  While the hyperdeterminant is a polynomial test for degeneracy, it has very high degree and rarely feasible to compute apart from the smallest cases. We note that one of us showed that hyperdeterminants coming from the root system $E_8$ can be computed efficiently \cite{HolweckOedingE8}.

We wish to show that we can still learn degenerate states even in cases where an expression or evaluation method for the hyperdeterminant is not known explicitly. 

\subsubsection{A note on uniform sampling}
The SLOCC description of degenerate states gives a way to uniformly sample the set. We take a random state $\ket{\phi}$ with coordinates $\phi_I = 0$  for all $I$ of Hamming distance $\leq 1$ away from $(00\cdots 0)$, we renormalize so that $||\phi|| = 1$, and then we apply a random element of the SLOCC group. The degenerate states form the half of the training data and are labeled as the class `0'. On another hand, we generate random tensors in $\Hn$ to build the second half of the training dataset, which correspond (with high probability\footnote{The justification of the term ``high probability'' is the following. For real numbers, the set of degenerate tensors has codimension at least 1 in the ambient space $\RR^{d^{n}}$ and thus it has measure zero. Thus, the probability is zero that a tensor chosen randomly from $\RR^{d^{n}}$ lands in a measure zero set. When floating point precision is used one expects that these continuous concepts are still well-approximated in the discrete case, even despite the fact that a non-empty subset (the exceptions) never has measure zero in this case.}) to non-degenerate states labeled as the class `1'. When an analytical method for evaluating the hyperdeterminant is available, we used it to remove all the noise from the training data (remove all random states that are not non-degenerate).

\subsubsection{Networks for degenerate states}
The $2^{\times n}$ hyperdeterminant is a homogeneous polynomial whose degree is $4, 24,$ or $128 $ respectively for $n=3,4,$ or $5$, and its degree is 36 in the $3\times 3\times 3$ case (see \cite{GKZ} for concrete degree formulas). 

In the $2\times 2\times 2$ case, the first layer of the hybrid network must contain at least $\ceil{\frac{1}{8}\binom{4+8-1}{4}}=42$ neurons with $x \mapsto x^4$ activation functions. We chose to implement 60 neurons in the first layer for the 3-qubit case.
In the two other cases, the number of needed neurons is way too high to hope for an implementation using Keras. One can still use fewer neurons to approximate the hyperdeterminant using hybrid networks, and even if the accuracy will not be very close to 100\%, on can still reach 60\% or more and use the idea presented in Section~\ref{prediction_application} to have a better overall accuracy. 

The second possibility is to use only LeakyReLU activation functions (except for the output layer), as it was done in Subsection~\ref{segre_classifier}. We also used the same decreasing structure with four layers, except in the 4-qubit case, when we chose to double the number of neurons in the three first layers. 

 The $2\times2$ non-degenerate states are in fact non-separable states. This case is thus already solved by the $2\times2$ separability classifiers presented in Section~\ref{segre_classifier}.

\subsubsection{Results}

In Table~\ref{Dual_Hybrid_results} we present the result for the $2\times2\times2$ case with the hybrid network. We used a training dataset of size 202600, a validation dataset of size 25600, and a testing dataset of size 32000. We reached 92\% of accuracy for the testing dataset.

\begin{table}[h!]
 \begin{center}
  \begin{tabular}{|c|c|c|c|c|c|}
\hline
Tensor size & Architecture & Training acc. & Validation acc. & Testing acc. & Loss  \\
\hline
$2\times2\times2$ & (60,10,1) & 92.49\% & 92.18\% & 92.09\% & 0.1837\\
\hline
 \end{tabular}
 \end{center}
\caption{Hybrid network architectures and accuracies for each tensor size, for degenerate and non-degenerate states classification.}
\label{Dual_Hybrid_results}
\end{table}

In Table~\ref{Dual_ReLU_results} we present the results for the $2\times2\times2$, $2^{\times 4}$, $2^{\times 5}$, and $3\times3\times3$ cases with networks using LeakyReLU activation functions. In the $2\times2\times2$ case, we used a training dataset of size 502600, a validation dataset of size 55600, and a testing dataset of size 32000. In the $2^{\times 4}$ and $2^{\times 5}$ cases, we used a training dataset of size 252400, a validation dataset of size 55600, and a testing dataset of size 52000. In the $3\times3\times3$ case, we used a training dataset of size 352400, a validation dataset of size 55600, and a testing dataset of size 52000. We reached an average accuracy of 96\% on the testing datasets for classification of degenerate states.

These results are quite interesting, especially in the 5-qubit case. In fact, as mentioned before, no explicit expression or efficient evaluation method for the $2^{\times 5}$ hyperdeterminant is available, and it is expected to be intractable. The network has 98\% accuracy for the testing dataset, which means that we can determine with very high accuracy if a tensor is degenerate or not, which is equivalent to determining if the hyperdeterminant vanishes or not. In the context of qualitative characterization of entanglement, one only needs to know if the state annihilates or not the hyperdeterminant. In this sense, we provide an original result for the evaluation of the nullity of the hyperdeterminant for $2^{\times 5}$ real tensors, that can be used to investigate quantum entanglement for 5-qubit systems (see Section~\ref{sec:5qubit}). 

\begin{table}[!h]
 \begin{center}
  \begin{tabular}{|c|c|c|c|c|c|}
\hline
Tensor size & Architecture & Training acc. & Validation acc. & Testing acc. & Loss \\
\hline
$2\times2\times2$ & (100,50,25,16,1) & 93.44\% & 92.53\% & 92.74\% & 0.1629 \\
$2^{\times 4}$ & (200,100,50,16,1) & 99.50\% & 95.95\% & 95.94\% & 0.01791\\
$2^{\times 5}$ & (100,50,25,16,1) & 99.95\% & 98.74\% & 98.83\% & 0.001533\\
$3\times3\times3$ & (100,50,25,16,1) & 98.18\% & 96.78\% & 96.83\% & 0.04770\\
\hline
 \end{tabular}
 \end{center}
\caption{LeakyReLU network architectures and accuracies for each tensor size, for degenerate and non-degenerate states classification.}
\label{Dual_ReLU_results}
\end{table}

\subsection{Border-rank classification} \label{rank_classifier}
A state $\ket{\phi} \in \Hn$ is said to have \emph{rank} $\leq R$ if there is an expression $\ket{\phi} = \sum_{r=1}^R \lambda_i\ket{\phi_i}$ with $\lambda_i \in \H$ and $\ket{\phi_i} \in \Hn$ \cite{CarliniGrieveOeding}. It is well known that when $n>2$ rank is not a closed condition, and it is not semi-continuous. The first example is 3-qubits, where the $W$-state $\ket{100} + \ket{010} + \ket{001}$, which has rank 3, and is in the closure of the generic orbit, that of $\ket{000}+\ket{111}$, which has rank 2 \cite{Holweck2012}. 

In another context, Bini \cite{Bini_BorderRank} defined the notion of border rank to regain semi-continuity. One says that a state $\ket{\phi} \in \Hn$ has \emph{border rank} $\leq R$ if there exists a family of rank $R$ states $\{\ket{\phi^\epsilon}\mid \epsilon >0\}$ and $\lim_{\epsilon \to 0} \ket{\phi^\epsilon} = \ket{\phi}$. Equivalently, the set of border rank $\leq R$ states, denoted $\sigma_R$ is the Zariski closure of the states of rank $\leq R$. 

By construction, when $\H = \CC^d$  we have a chain
\[
\sigma_1 \subsetneq \sigma_2 \subsetneq \cdots \subsetneq \sigma_g = \Hn
\]
which ends at $\Hn$ because the set of separable states is linearly non-degenerate in $\Hn$. The minimal integer $g$ for which $\sigma_g = \Hn$ is called the \emph{generic rank}. A simple dimension count gives the \emph{expected generic rank}:
\[
e=\left\lceil
\frac{d^n}{n(d-1)+1}
\right\rceil.
\] 

For complex tensors of format $d^{\times n}$ the only known case when the generic rank differs from the expected rank is $3\times 3\times 3$ tensors or 3-qutrits ($d=3, n = 3$ in the formulation above). Here the generic rank is 5 and not 4 as expected. For $2\times 2\times 2\times 2$ tensors or 4 qubits (the case $d=2, n = 4$ above) the generic rank equals the expected generic rank of 4, even though it is known that the set of rank 3 tensors is defective. It is conjectured that these are the only such cases  (see \cite{BaurDraismadeGraaf, AOP_Segre, ChiOttVan}).

Let $\sigma_s^\circ$ denote the states of rank exactly $s$. 
When $\H = \RR^d$ there can be more than one semi-algebraic set $\sigma_s^\circ$ that is full dimensional. These ranks are called \emph{typical ranks}.  Note that for complex numbers there is only one typical rank, which is the \emph{generic rank}.
Given a probability measure $\mu$ on $\Hn$ the measure of $\sigma_s^\circ$ represents the probability of a random state having rank $s$. 
The following example from \cite{deSilva_Lim_ill_posed} illustrates illustrates this situation.
\begin{example}\label{example_typical_rank}Set $n=3$, $d=2$. Both 2 and 3 are typical ranks, and moreover a positive volume set of tensors with rank 3 have no optimal low-rank approximation. The $2\times 2\times 2$ hyperdeterminant $D$ separates $\RR^{2\times 2\times 2}$ into regions of constant typical rank. 
We formed a sample of tensors of real rank at most 3 as a normalized sum of tensor products of vectors with entries uniformly distributed in $[-.5,.5]$.
For this distribution we found:
\begin{enumerate}
\item with approximate frequency 86.6\% $D(\phi) >0$, in which case the $\RR$-rank is 2,
\item with approximate frequency 13.4\% $D(\phi) <0$, in which case $\RR$-rank is 3,
\item with frequency 0\% $D(\phi) =0$, in which case the $\RR$-rank can be 0,1,2,3.
\qedhere\end{enumerate}
\end{example}

\subsubsection{A note on uniform sampling}
We construct points on algebraic varieties by utilizing rational parameterizations of the form $\phi\colon U \to V $, with $\phi$ a rational function, and $U$, $V$ subsets of a normed linear space.  In general, if $S$ is a uniform sample of $U$, $\phi(S)$ will not be a uniform sample in $\phi(U)$.
Instead of trying to uniformly sample the image of these varieties \cite{Hernandez-Mederos2003,Pagani2018,Dufresne2018}, we provide training data that is constructed in the way we think that one might construct the test data, or how one might imagine the data is produced from a state constructed in a lab. Whether this is a reasonable assumption is open for debate.  On one hand, it is always possible (by forcing incoherence, for instance) for an adversarial entity to construct a data set on the same variety that fools our classifier. 
On the other hand, we can say that if we train our neural network on data produced by a certain process, we can be confident that if the validation data is constructed by the same process, then the accuracy numbers we report reflect a reasonable measure of confidence in the classifier.


\subsubsection{Results}


In the $2\times2\times2$ case, we trained our networks to recognize the exact rank of tensors. In fact, for building the training and validation datasets, we provided tensors from Segre variety for rank one, tensors that are SLOCC equivalents to the biseparable and GHZ states for rank 2, and states that are SLOCC equivalents to the W state for rank 3, following the classification of 3-qubits (see Table~\ref{3qubits_classification}). We also took into account the typical rank (see Example~\ref{example_typical_rank}) and used the sign of the hyperdeterminant to separate rank 2 and 3. 

In the $2^{\times 4}$ and $2^{\times 5}$ cases, we generated tensors for each class `$k$' by summing $k+1$ rank one tensors (and renormalized). The class `0' corresponds to rank one tensors, and the class `$r$' correspond to tensors with border rank $r+1$. The training dataset is equally divided such that it contains the same number of samples for each class. In the $2^{\times 4}$, we generated tensors up to border rank 4, and up to border rank 5 in the $2^{\times 5}$ case. By generating vectors for each class this way, we introduce noise in the data, especially in the real case. However, even in the presence of noise, the network is able to learn and predict border rank of states with good accuracy (84\% in the 4-qubit case, and 80\% in the 5-qubit case, on the testing dataset, which is also generated by the same process). 

For example, we repeatedly evaluated the 5-qubit network on the input states SLOCC equivalent to $\ket{W_5} = \ket{00001}+\ket{00010}+\ket{00100}+\ket{01000}+\ket{10000}$  (see Section~\ref{prediction_application}), and most of the time we (correctly) obtain the class `1' (see Figure~\ref{fig:W5_rank}), indicating border rank 2. 

Table~\ref{Rank_Hybrid_results} contains our results for the $2\times2\times2$ rank classification with the hybrid network. We used a training dataset of size 102400, a validation dataset of size 25600, and a testing dataset of size 32000. We reached 87\%  accuracy for the testing dataset.

\begin{table}[!h]
 \begin{center} 
  \begin{tabular}{|c|c|c|c|c|c|}
\hline
Tensor size & Architecture & Training acc. & Validation acc. & Testing acc. & Loss \\
\hline
$2\times2\times2$ & (169,25,3) & 88.19\% & 88.03\% & 87.95\% & 0.3028\\
\hline
 \end{tabular}
 \end{center}
\caption{Hybrid network architectures and accuracies for 3-qubits, for rank classification.}
\label{Rank_Hybrid_results}
\end{table}

Table~\ref{Rank_ReLU_results} contains results for the $2\times2\times2$, $2^{\times 4}$, and $2^{\times 5}$  cases using the LeakyReLU network. In the $2\times2\times2$ rank classification problem, we used a training dataset of size 502500, a validation dataset of size 55600, and a testing dataset of size 52000. In the $2^{\times 4}$ and $2^{\times 5}$ border rank classification problems, we respectively used a training dataset of size 502400 and 802400, a validation dataset of size 55600 and a testing dataset of size 52000.

\begin{table}[!h]
 \begin{center}
  \begin{tabular}{|c|c|c|c|c|c|}
\hline
Tensor size & Architecture & Training acc. & Validation acc. & Testing acc. & Loss  \\
\hline
$2\times2\times2$ & (200,100,50,25,3) & 94.19\% & 94.07\% & 93.79\% & 0.1674\\
$2^{\times 4}$ & (200,100,50,25,3) & 85.49\% & 84.45\% & 84.47\% & 0.3144\\
$2^{\times 5}$ & (200,100,50,25,3) & 81.39\% & 79.88\% & 79.77\% & 0.4230\\
\hline
 \end{tabular}
 \end{center}
\caption{LeakyReLU network architectures and accuracies for each tensor size, for rank and border-rank classification.}
\label{Rank_ReLU_results}
\end{table}

\section{How to use the classifier after training}\label{sectionApplication}

\subsection{Predictions for a single quantum state}\label{prediction_application}
We study quantum entanglement from a qualitative point of view. Quantum states are regrouped into classes, which correspond to SLOCC orbits. When the number of orbits is infinite, we talk about families depending on parameters \cite{Holweck2017}. Any state belonging to a specific class will be equivalent to another state in the same class by the action of an element of this group. Moreover, other properties, such as separability, degeneracy, and tensor rank, are invariants under the SLOCC action \cite{Holweck2012}. Hence, our classifiers should give the same answer for SLOCC equivalent states. 

However, the accuracy of our classifiers is not 100\%, and thus the probability of misclassification is non-zero, which gives little confidence for a single test. In order to significantly reduce misclassification for a specific quantum state (or set of states) we generate SLOCC equivalent states and look at the most frequent answer of the neural network classifier. A histogram of these results gives a graphical representation of the neural network output distribution. Examples are investigated in the next sections. We report the results after applying the classifier to enough points of the orbit so that the consensus was reached (the answer was the same for larger data samples). 

\begin{remark}
In the case of an experimental implementation of our classifiers, this process of generating datasets for predictions should be put into perspective with the impossibility of generating several copies (due to the non-cloning theorem \cite{Nielsen2011}) of the same state, before applying a random local SLOCC transformation \cite{Xu2008,Lanyon2009}.
\end{remark}

\subsection{Three-qubit entanglement classification}\label{sec:3qubit}

The 3-qubit case is simplest case that illustrates the existence of non-equivalent entangled states, such as the $\ket{GHZ}$ and $\ket{W}$ states, as it was noted in \cite{Duer2000}. The entanglement classification of 3-qubit systems (see Table~\ref{3qubits_classification}) can be described by join, secant and tangential varieties \cite{Holweck2012}, and by using dual varieties as well \cite{Holweck2012,Miyake2002,Miyake2003,Miyake2004}. Rank can also be used as an algebraic measure of entanglement to distinguish between several 3-qubits entanglement classes \cite{Brylinski2000}. 
\begin{table}[htp] 
\centering
 \begin{center}
  \begin{tabular}{|c|c|c|c|}
\hline
Normal forms & Class & Rank & Cayley's $\Delta_{222}$  \\
\hline
$\ket{000}+\ket{111}$ & GHZ & 2 & $\neq 0$\\
$\ket{001}+\ket{010}+\ket{100}$ & W  & 3 &  0 \\
$\ket{001}+\ket{111}$ & Bi-Separable C-AB & 2 & 0\\
$\ket{010}+\ket{111}$ & Bi-Separable B-CA & 2 & 0 \\
$\ket{100}+\ket{111}$ &  Bi-Separable A-BC & 2 & 0 \\
$\ket{000}$ & Separable & 1 & 0 \\
\hline
 \end{tabular}
 \end{center}
\caption{Entanglement classification of 3-qubit systems under SLOCC.}
\label{3qubits_classification}
\end{table}

Our binary classifiers built in Section~\ref{sectionExperiments} can also be used to distinguish between 3-qubit entanglement classes. In fact, separable states can be detected using the binary classifier for tensors on the Segre variety, presented in Section~\ref{segre_classifier}. All states that are SLOCC equivalents to the $\ket{GHZ}$ state correspond to non-degenerate states, and thus can be recognized using the binary classifier for tensors on the dual variety of the Segre variety, presented in Section~\ref{dual_classifier}. Finally, to distinguish between Bi-Separable states and states that are SLOCC equivalents to $\ket{W}$, we exploit the rank of tensors, by using the rank classifier, presented in Section~\ref{rank_classifier}, as a predictor for distinguishing between rank 2 and rank 3 tensors. See Figures~\ref{fig:3qubits_sep},\ref{fig:3qubits_bisep},\ref{fig:3qubits_w},\ref{fig:3qubits_ghz}.

\subsection{Entanglement of five-qubit systems}\label{sec:5qubit}

In the five-qubit case, since no complete or parametrized classification is known, distinguishing entanglement classes is a difficult task. Some prior work focused on using filters \cite{Osterloh2006} or using polynomial invariants \cite{Luque2006}. In these papers, they considered four 5-qubit systems $\ket{\Phi_1}$, $\ket{\Phi_2}$, $\ket{\Phi_3}$, and $\ket{\Phi_4}$ (see Equations~\ref{eq:phi1},~\ref{eq:phi2},~\ref{eq:phi3},~\ref{eq:phi4}).
\begin{align}
\label{eq:phi1}
\ket{\Phi_1} &= {\textstyle\frac{1}{\sqrt{2}}} \left( \ket{00000} + \ket{11111} \right)
\\
\label{eq:phi2}
\ket{\Phi_2} &= {\textstyle\frac{1}{2}} \left( \ket{11111} + \ket{11100} + \ket{00010} + \ket{00001} \right)
\\
\label{eq:phi3}
\ket{\Phi_3} &= {\textstyle\frac{1}{\sqrt{6}}} \left( \sqrt{2}\ket{11111} + \ket{11000} + \ket{00100} + \ket{00010} + \ket{00001} \right)
\\
\label{eq:phi4}
\ket{\Phi_4} &= {\textstyle\frac{1}{2\sqrt{2}}} \left( \sqrt{3}\ket{11111} + \ket{10000} + \ket{01000} + \ket{00100} + \ket{00010} + \ket{00001} \right)
\end{align}
These states do not belong to the same entanglement classes \cite{Osterloh2006,Luque2006}. We ask if these states are degenerate or not. Our classifier for 5-qubit degenerate states predicts that each of these states is degenerate, hence the $2^{\times 5}$ hyperdeterminant should vanish on each (see Figure~\ref{fig:5qubits_phis}).

On the other hand, the states $\ket{\delta_1}$ and $\ket{\delta_2}$ (see Equations~\ref{eq:delta1} and \ref{eq:delta2}) are known to be non-degenerate \cite{Holweck2014a,Holweck2016}. Our classifier also predicts they are non-degenerate, see Figure~\ref{fig:5qubits_delta}. 

\begin{equation}\label{eq:delta1}
\begin{split}
\ket{\delta_1} = \; & {\textstyle\frac{1}{\sqrt{14}}} \big( \ket{00000} + \sqrt{3}\ket{00011} + \ket{00100} + \ket{01000} + \ket{01001} + \\ &\sqrt{2}\ket{01111} + \ket{10001} + \ket{10110} + \ket{11000} + \ket{11011} + \ket{11101} \big)
\end{split}\end{equation}
\begin{equation}\label{eq:delta2}
\begin{split}
\ket{\delta_2} =\;& {\textstyle \frac{1}{\sqrt{11}}} \big( \ket{00000} + \ket{00100} + \ket{00111} + \ket{01010} - \ket{01101} + \\ &\ket{10001} + \ket{10011} + \ket{10111} - \ket{11000} + \ket{11110} \big)
\end{split}\end{equation}

\section{Acknowledgements}
This work was partially supported by the R\'egion Bourgogne Franche-Comt\'e, project PHYFA (contract 20174-06235), the French “Investissements d’Avenir” programme, project ISITE-BFC (contract ANR-15-IDEX-03) and by ``BQR Mobilit\'e Doctorante'' at U.T.B.M. The first author thanks the Department of Mathematics and Statistics at Auburn University for its hospitality,and Dr. Mark Carpenter (Auburn University) and Dr. Frabrice Lauri (U.T.B.M.) for lectures and discussion about Machine Learning. The authors want to thank Dr. Fr\'ed\'eric Holweck for his advice, comments and discussions as well as the reviewers for their relevant comments and remarks, which improved the exposition of the article.
\newcommand{\arxiv}[1]{{\tt \href{http://arxiv.org/abs/#1}{{arXiv:#1}}}}

\bibliography{references}

\section*{Histograms}
\begin{figure}[!h]
\includegraphics[width=0.52\textwidth]{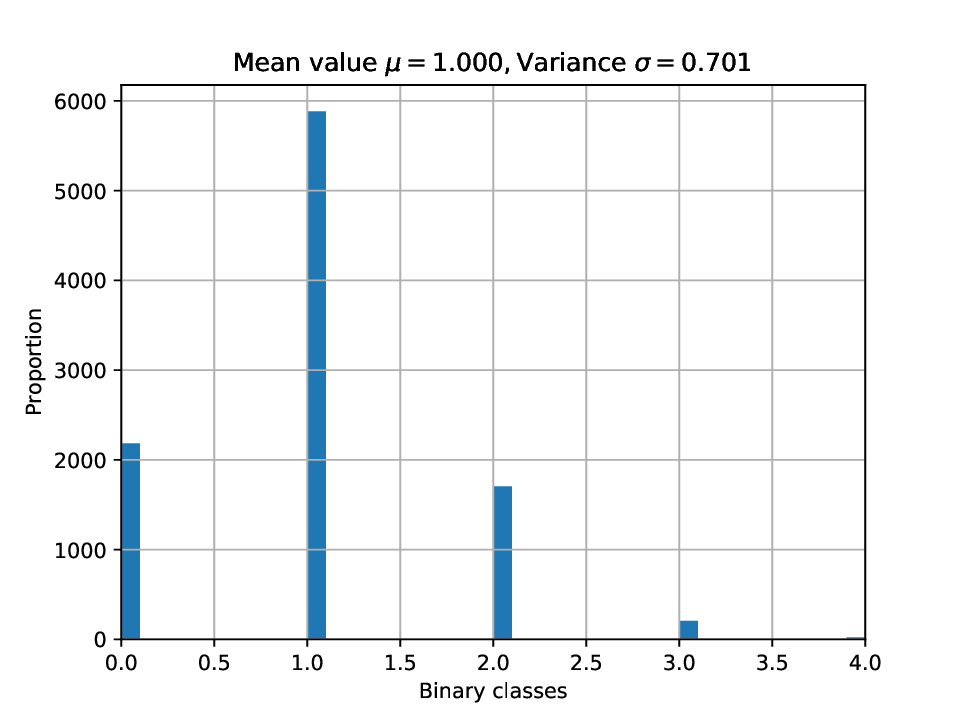}
\caption{Histogram of the border rank classifier predictions for 10000 points SLOCC equivalent to the state $\ket{W_5}$. The plot predicts that the state is of border rank 2 (class `1').} \label{fig:W5_rank}
\end{figure}

\begin{figure}[!h]
\includegraphics[width=0.327\textwidth]{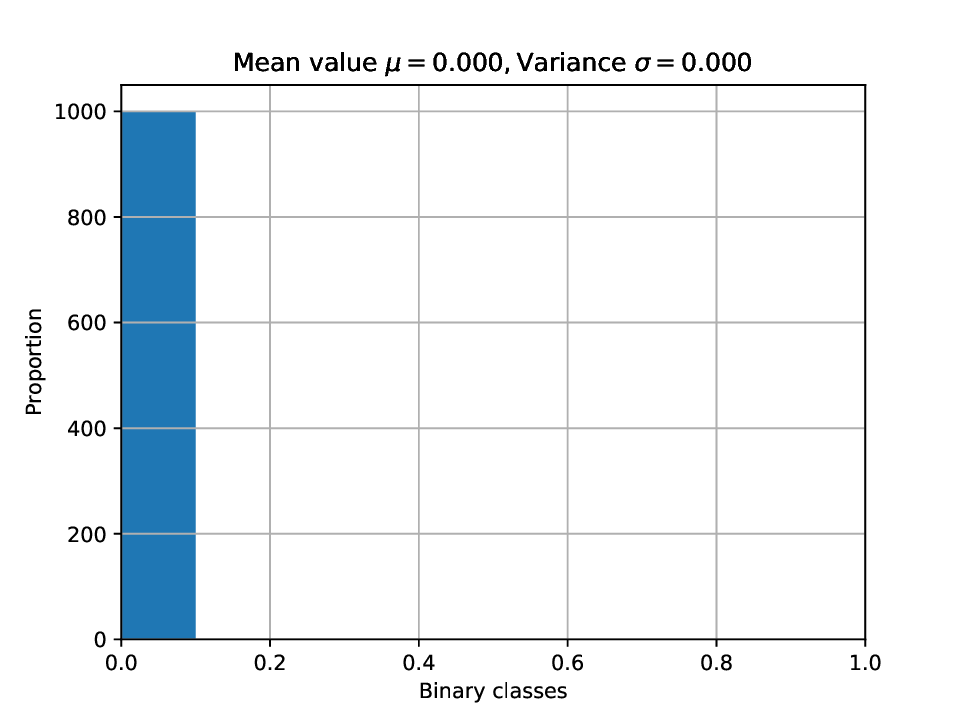}
\includegraphics[width=0.327\textwidth]{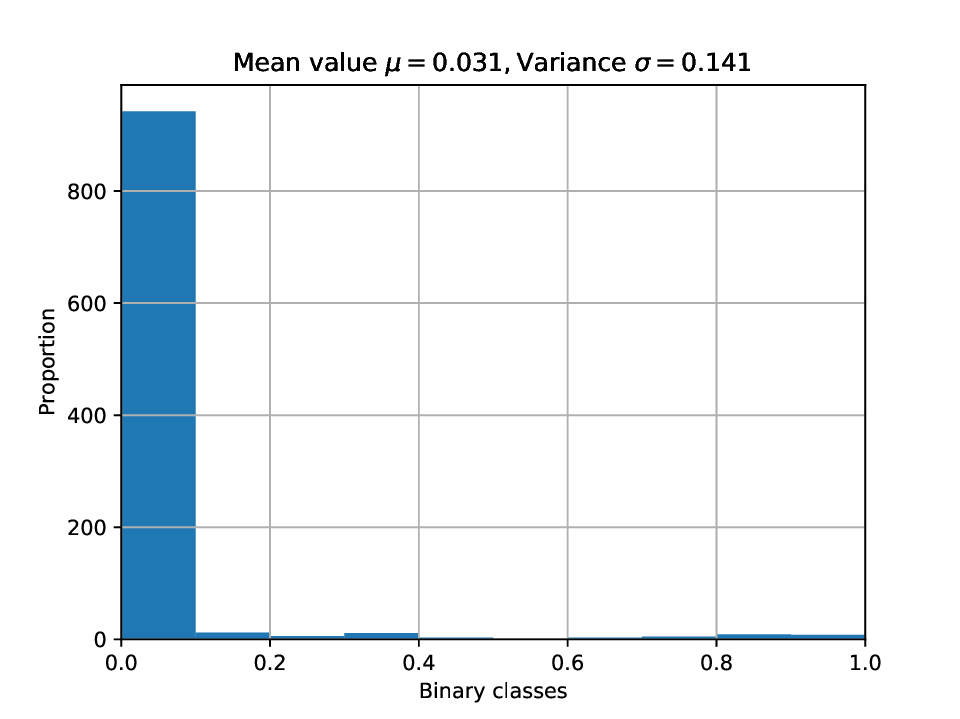}
\includegraphics[width=0.327\textwidth]{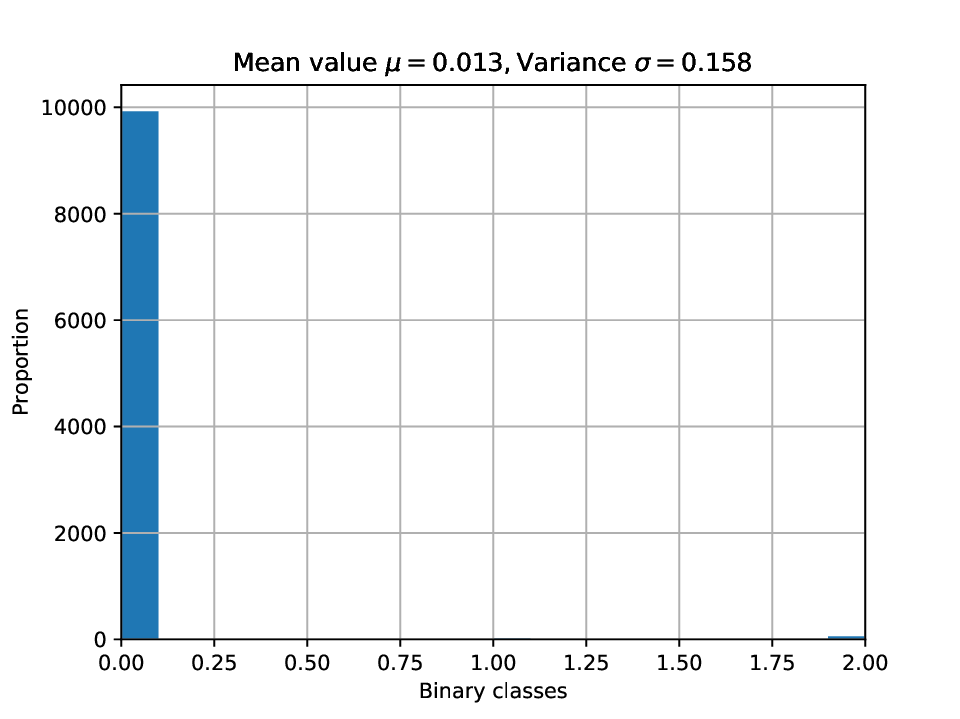}

\caption{Histogram predictions for 1000 points  SLOCC equivalent to $\ket{000}$ using our trained classifiers for (in order, from left to right) separable states, degenerate states and tensor rank. Being class `0' in each plot respectively predicts that the state is separable, degenerate, and of rank one.} \label{fig:3qubits_sep}
\end{figure}

\begin{figure}[!h]
\includegraphics[width=0.327\textwidth]{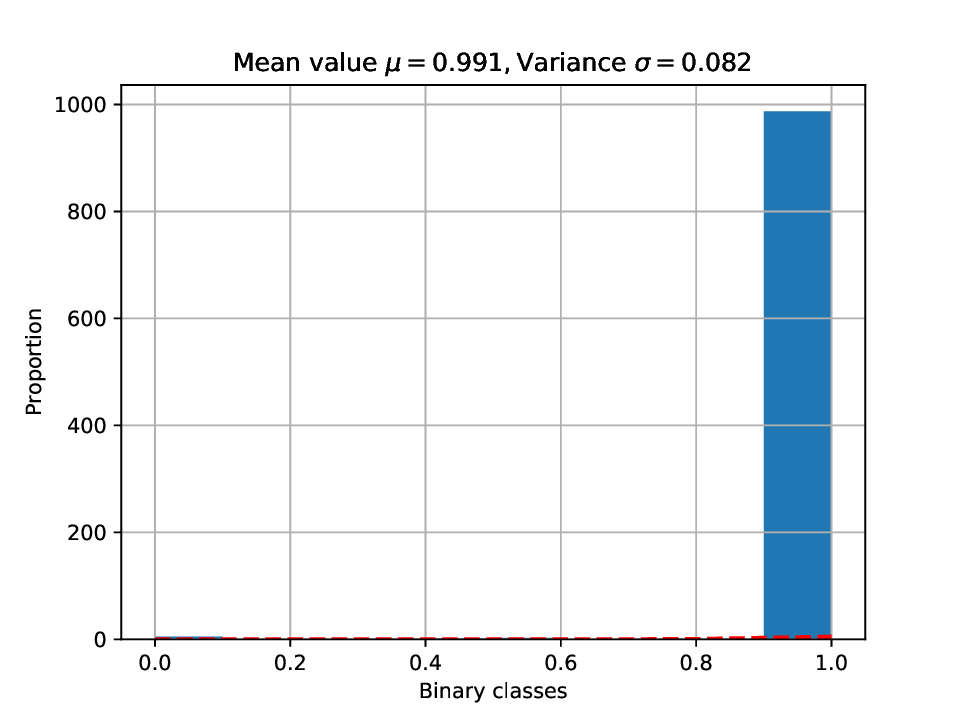}
\includegraphics[width=0.327\textwidth]{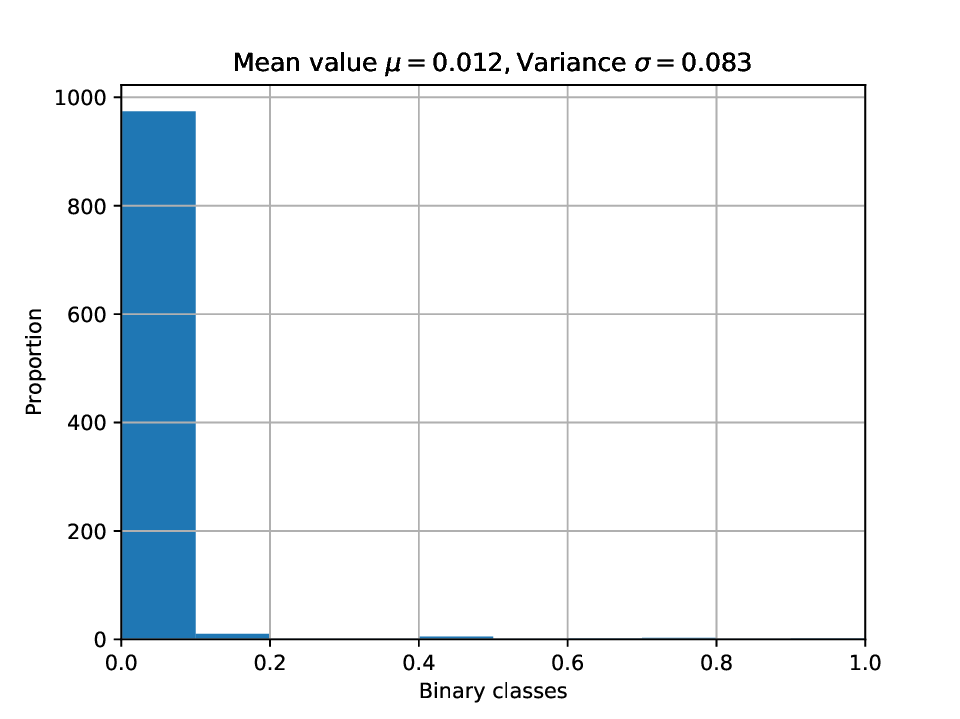}
\includegraphics[width=0.327\textwidth]{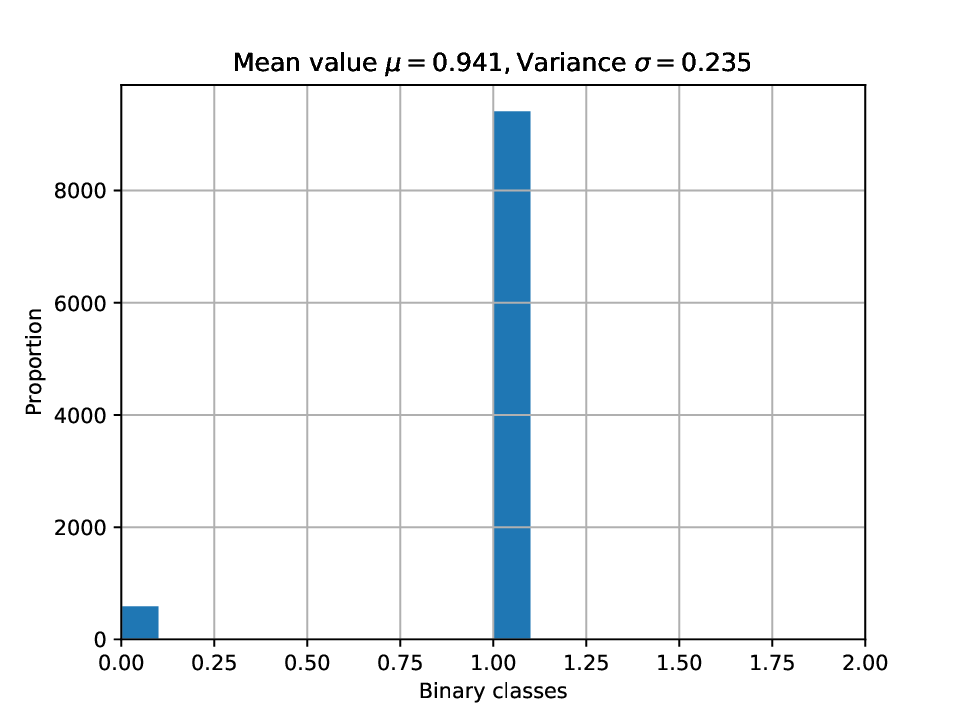}

\caption{Histogram predictions for 1000 points that are SLOCC equivalent to $\frac{1}{\sqrt 2}(\ket{000}+\ket{011})$ using our trained classifier for (in order, from left to right) separable states, degenerate states and tensor rank classifiers. The plots predict the state is entangled (class `1' on the left), degenerate (class `0' in the middle), and of rank two (class `1' on the right).} \label{fig:3qubits_bisep}
\end{figure}

\begin{figure}[!h] 
\centering
\includegraphics[width=0.327\textwidth]{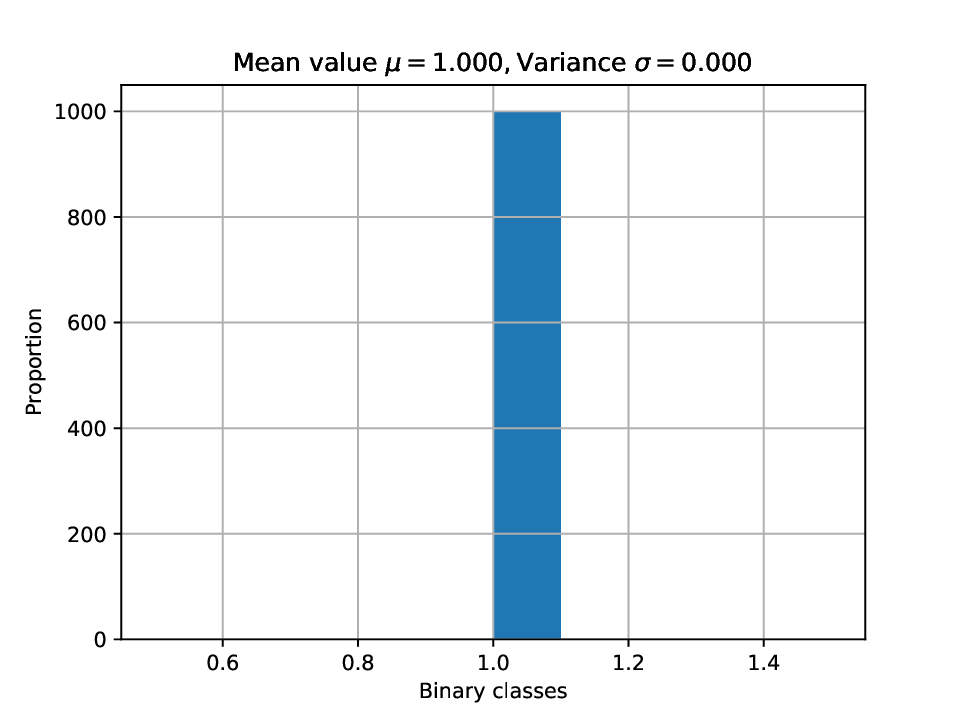}
\includegraphics[width=0.327\textwidth]{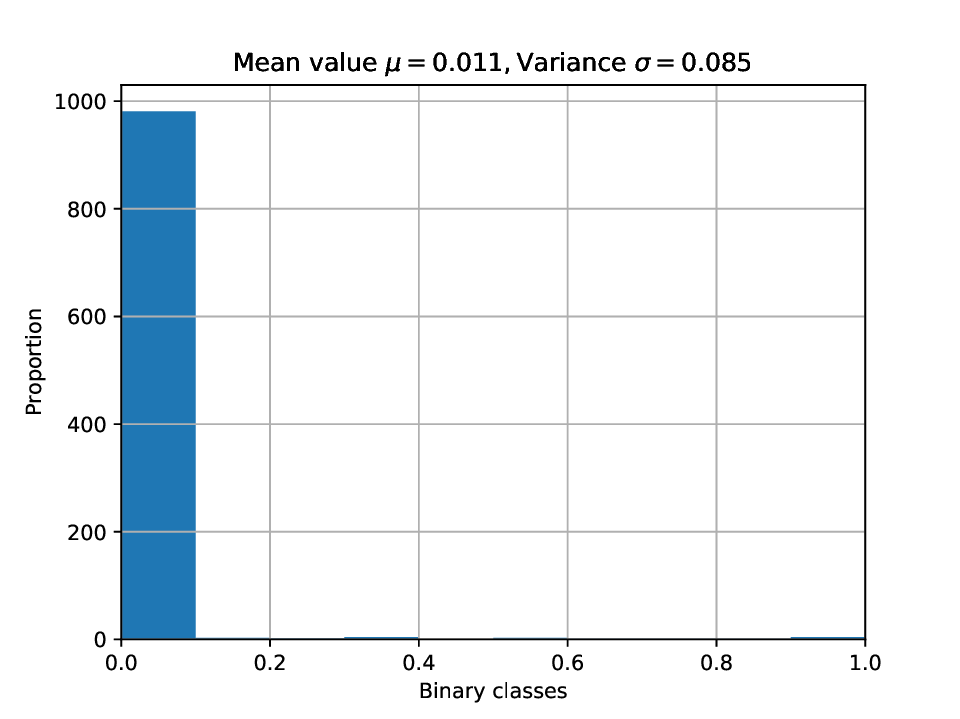}
\includegraphics[width=0.327\textwidth]{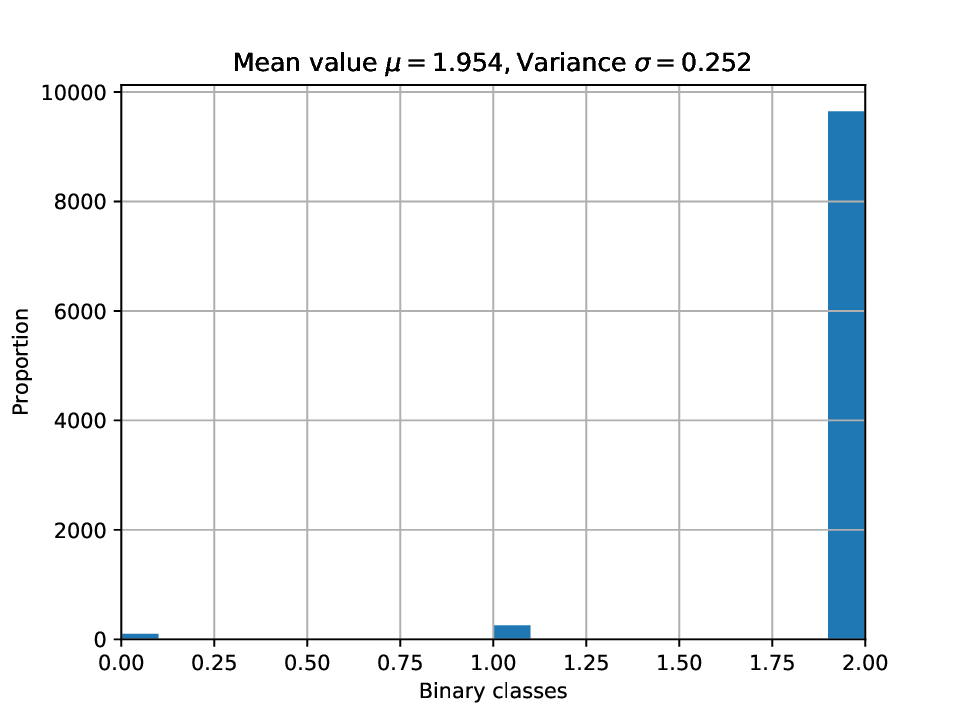}
\caption{
Histogram predictions for points that are SLOCC equivalent to the W-state $\frac{1}{\sqrt 3}(\ket{001}+\ket{010}+\ket{100})$ using our trained classifiers for (in order, from left to right) separable states, degenerate states and tensor rank. The left plot and middle plots use 1000 points and respectively predict that the state is entangled (class `1') and degenerate (class `0'). The right plot with 10000 equivalent points predicts that the state is of rank three (class `2').} \label{fig:3qubits_w}
\end{figure}

\begin{figure}[!h]
\includegraphics[width=0.327\textwidth]{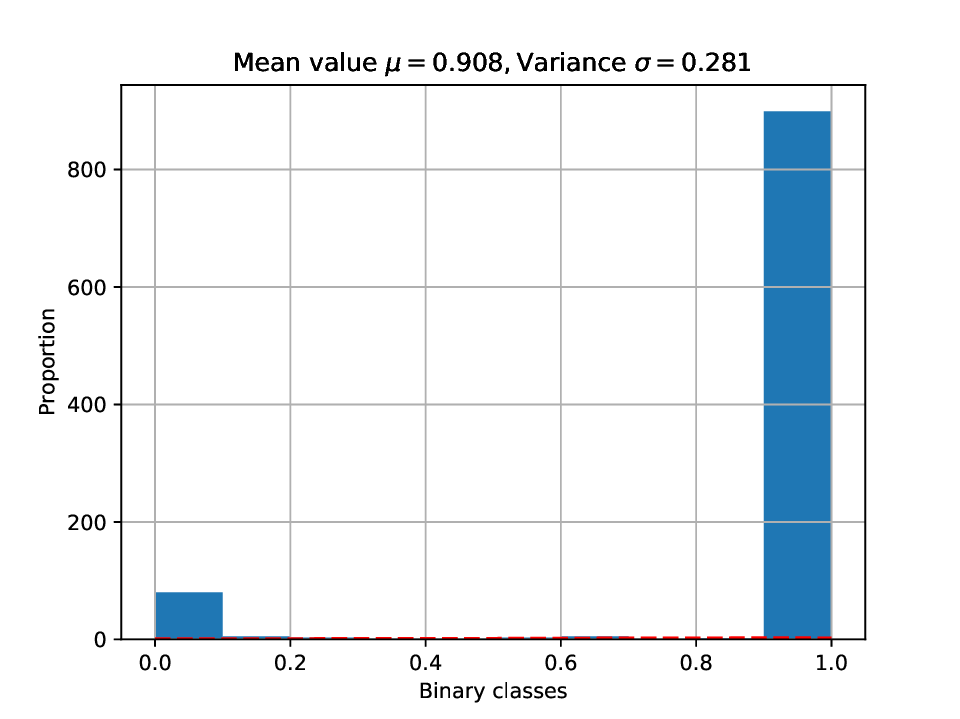}
\includegraphics[width=0.327\textwidth]{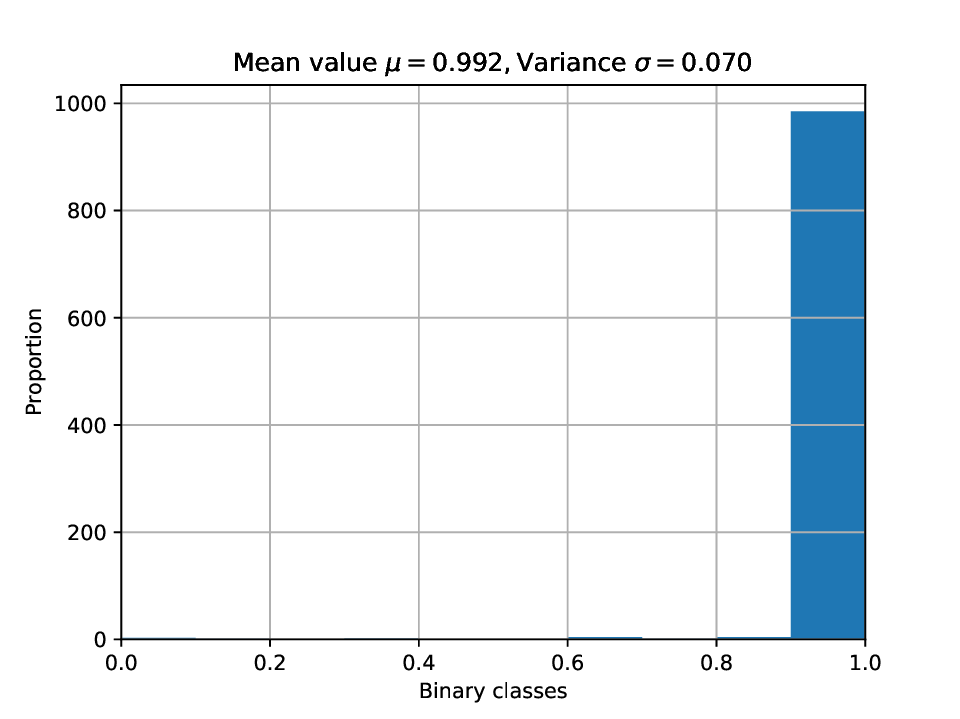}
\includegraphics[width=0.327\textwidth]{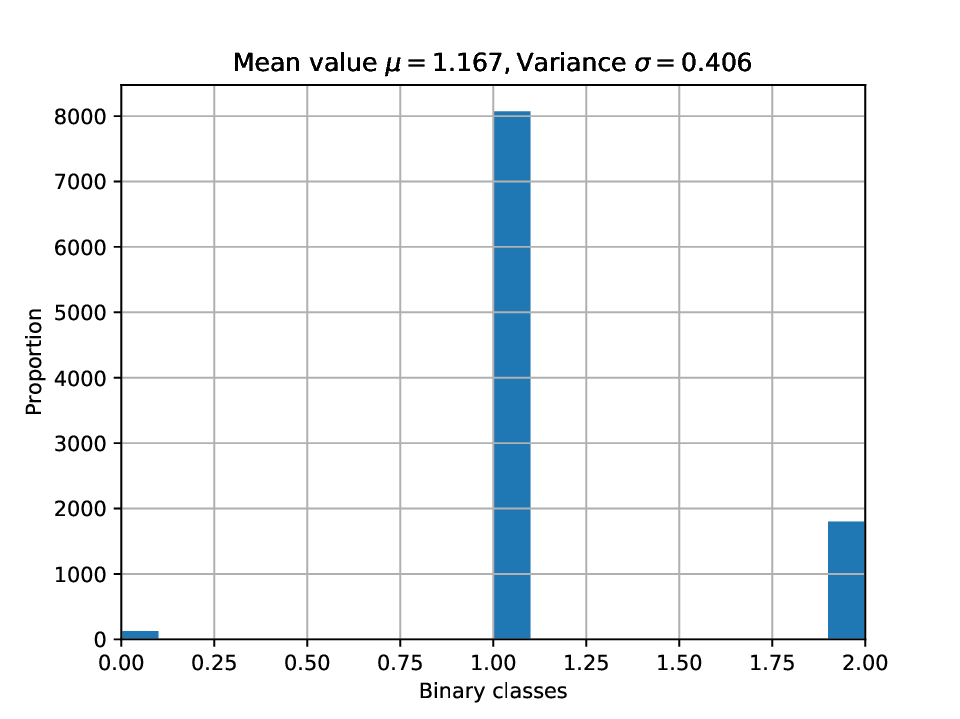}
\caption{
Histogram predictions for points that are SLOCC equivalent to the GHZ-state $\frac{1}{\sqrt 2}(\ket{000}+\ket{111})$ using our trained classifiers for (in order, from left to right) separable states, degenerate states and tensor rank. The left plot and middle plots use 1000 points and respectively predict that the state is entangled (class `1') and non-degenerate (class `1'). The right plot with 10000 equivalent points predicts that the state is of rank two (class `1').
} \label{fig:3qubits_ghz}
\end{figure}

\begin{figure}[!h]
\includegraphics[width=0.244\textwidth]{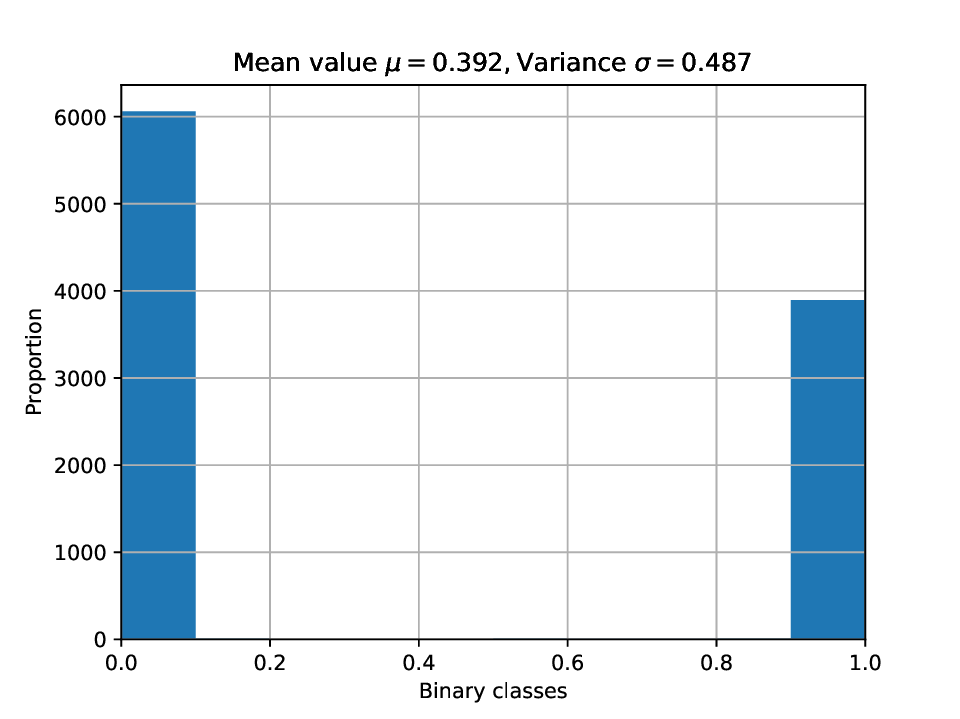}
\includegraphics[width=0.244\textwidth]{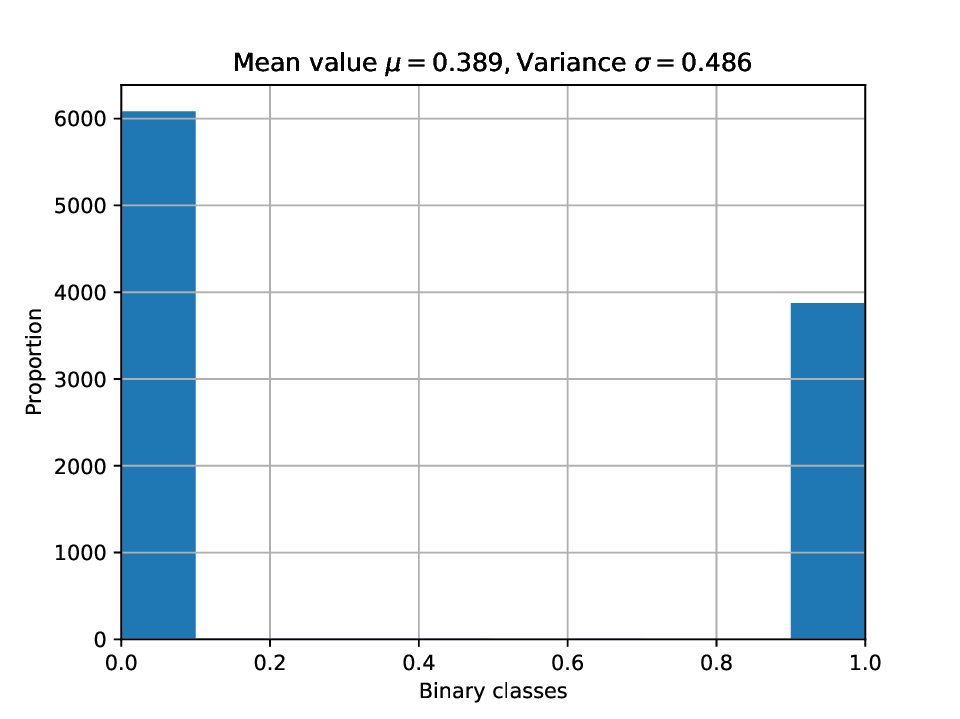}
\includegraphics[width=0.244\textwidth]{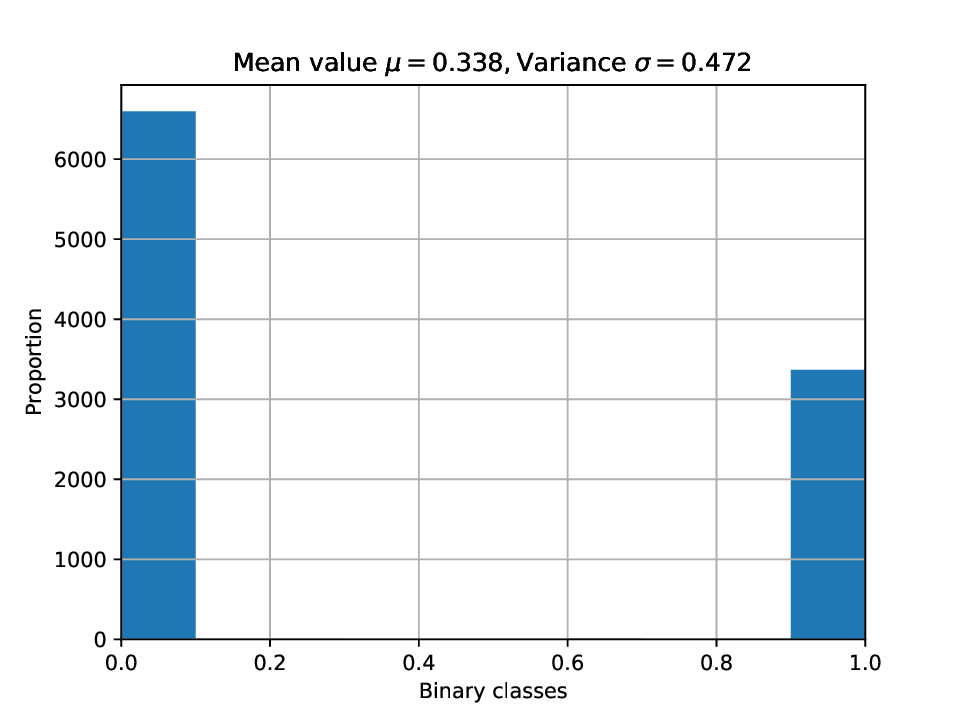}
\includegraphics[width=0.244\textwidth]{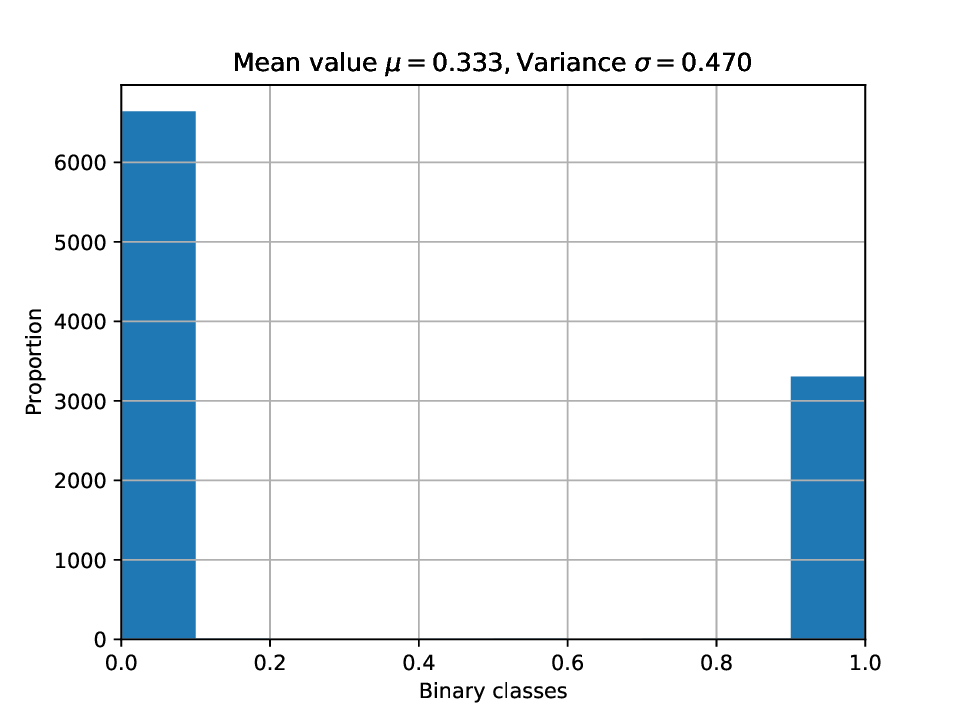}
\caption{Histograms for the degenerate states classifier for 10000 points respectively SLOCC equivalent to the  states $\ket{\Phi_1}$, $\ket{\Phi_2}$, $\ket{\Phi_3}$, and $\ket{\Phi_4}$, from left to right. Classes `0'  and `1' respectively refer to degenerate and non-degenerate states. Here all states are predicted to be degenerate.} \label{fig:5qubits_phis}
\end{figure}

\begin{figure}[!h]
\includegraphics[width=0.45\textwidth]{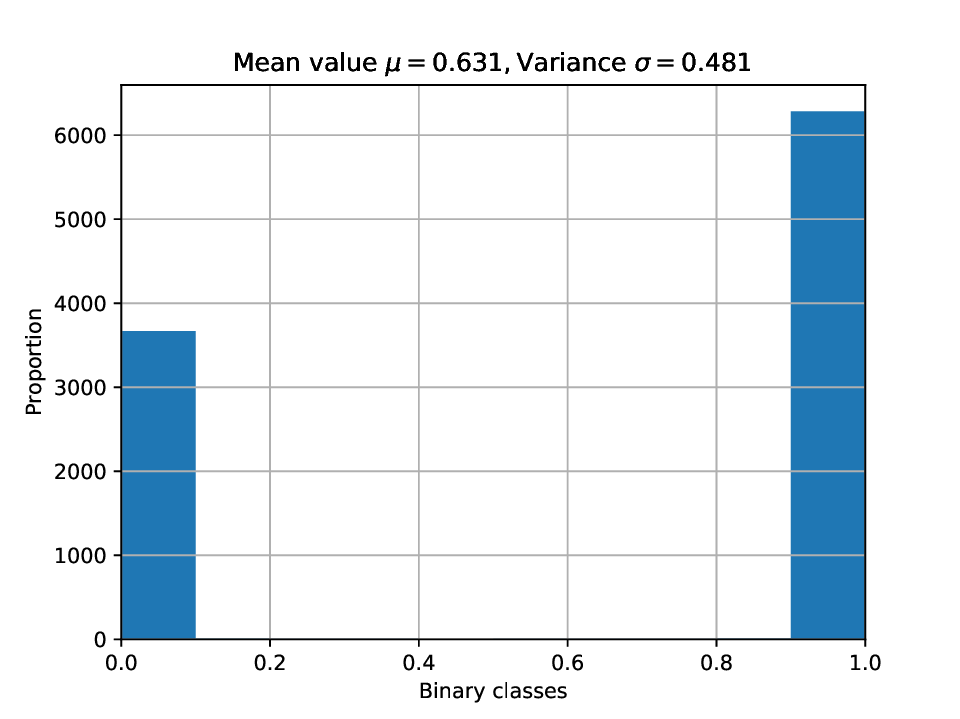}
\includegraphics[width=0.45\textwidth]{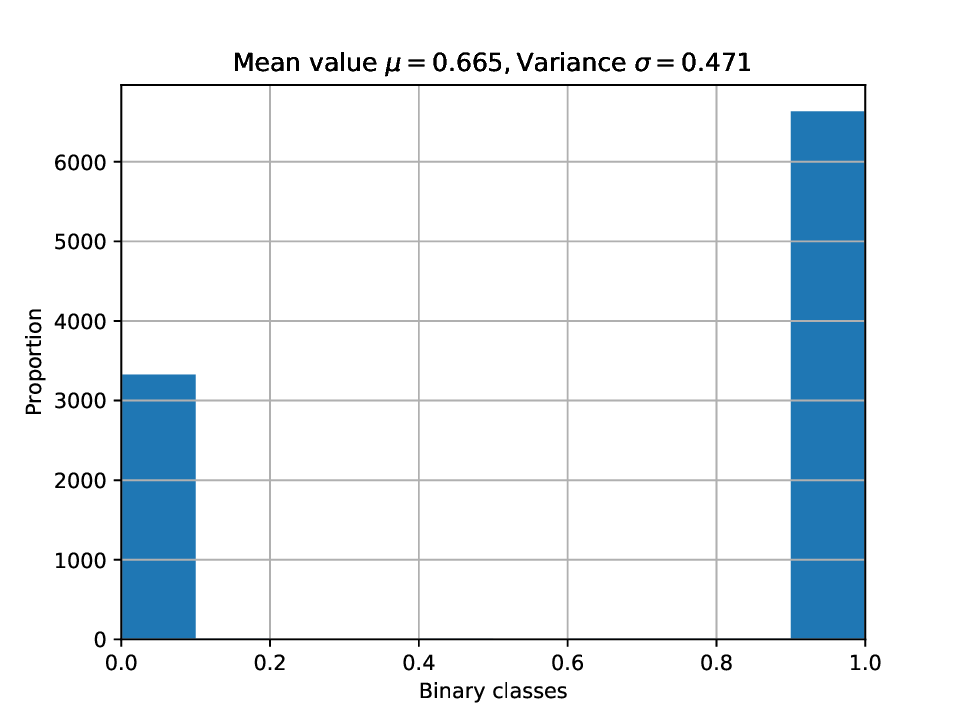}
\caption{Histograms for the degenerate states classifier on 10000 points respectively SLOCC equivalent to $\ket{\delta_1}$ (left) and $\ket{\delta_2}$ (right). Classes `0'  and `1' respectively refer to degenerate and non-degenerate states.  Here both states are predicted to be non-degenerate.} \label{fig:5qubits_delta}
\end{figure}

\end{document}